%% file: main.tex
\documentclass[runningheads]{llncs}

\usepackage{eccv}

\usepackage{eccvabbrv}

\usepackage{graphicx}
\usepackage{booktabs}

\usepackage[accsupp]{axessibility}  

\usepackage{hyperref}

\usepackage{orcidlink}

\usepackage{algorithm, algorithmic}

\usepackage{multirow, graphicx, booktabs, colortbl}

\usepackage{pgffor} 

\newcommand{\ours}{CAI }

\begin{document}

\title{See Only When Needed: Context-Aware Attention Intervention for Mitigating Hallucinations in LVLMs}

\titlerunning{Context-Aware Attention Intervention for Mitigating Hallucinations}


\author{Yuqing Lei\inst{1}\orcidlink{0009-0009-4669-3774} \and 
        Wenbo Lyu\inst{1}\orcidlink{0009-0004-8962-0448} \and 
        Yingjun Du\inst{2}\orcidlink{0000-0001-7537-6457} \and 
        Xiantong Zhen\inst{3}\orcidlink{0000-0001-5213-0462} \and 
        Cees G.M. Snoek\inst{2}\orcidlink{0000-0001-9092-1556} \and 
        Ling Shao\inst{1}\orcidlink{0000-0002-8264-6117}\thanks{Corresponding author: ling.shao@ieee.org}}

\authorrunning{Lei~et al.}

\institute{University of Chinese Academy of Sciences \and
University of Amsterdam \and
United Imaging Healthcare Co., Ltd.}

\maketitle

\input{sections/0_abstract}
\input{sections/1_introduction}
\input{sections/2_related}

\input{sections/3_preliminary}
\input{sections/4_methods}
\input{sections/5_analysis}
\input{sections/6_experiments}

\input{sections/7_conclusion}

%
%
\bibliographystyle{splncs04}
\bibliography{ref}

\input{sections/8_appendix}

\end{document}

%% file: sections/0_abstract.tex
\begin{abstract}
Large Vision-Language Models (LVLMs) excel at multimodal tasks but remain prone to object hallucinations.
Prior training-free remedies often \emph{uniformly} strengthen visual signals, which may also amplify irrelevant regions and introduce spurious evidence, harming fluency.
We propose \textbf{Context-aware Attention Intervention (CAI)}, a training-free inference-time mechanism that enforces a \emph{see only when needed} principle via \emph{two-axis selectivity}: \emph{where} to look and \emph{when} to intervene.
At each decoding step, CAI derives token-specific visual relevance from early-layer representations to localize semantically aligned regions, and applies a conservative, entropy- and depth-gated attention tilt only for uncertainty-spiking tokens in deeper layers where visual grounding degrades, leaving confident tokens and irrelevant regions largely unchanged.
This targeted intervention strengthens visual grounding while preserving linguistic fluency, and it yields consistent improvements \emph{even without} contrastive decoding, which remains optional as an auxiliary bias-suppression module.
Extensive experiments across multiple LVLM backbones and benchmarks show that CAI achieves state-of-the-art hallucination mitigation, and our analysis characterizes CAI as a KL-minimal attention reweighting with bounded interference under inactive gates or small tilts. Code is available at \url{https://github.com/Iris1946/CAI}.
\keywords{Vision-Language Models \and Hallucination Mitigation \and Attention Intervention}
\end{abstract}

%% file: sections/1_introduction.tex
\section{Introduction}
\label{sec:introduction}

Large Vision-Language Models (LVLMs)~\cite{liu2023visual,dai2023instructblip,bai2023qwen,zhu2023minigpt,ye2023mplug} have achieved remarkable performance on multimodal tasks such as image captioning~\cite{li2023blip}, visual question answering~\cite{liu2023visual,dai2023instructblip}, and multimodal reasoning~\cite{huang2025vision,liu2025visual,zhou2025r1,tan2025reason,shen2025vlm}.
Despite these advances, LVLMs remain prone to hallucinations~\cite{li2023evaluating,zhou2023analyzing,liu2024survey,bai2024hallucination}, producing outputs that are linguistically plausible yet ungrounded in the visual input.
This failure mode critically undermines reliability in real-world applications where correctness and visual grounding are essential.

Prior studies~\cite{rohrbach2018object,li2023evaluating,zhou2023analyzing} attribute hallucinations to statistical biases in large-scale training data, including frequent objects and co-occurrence patterns.
Hallucinations also arise from model-intrinsic factors, particularly the over-reliance on language priors inherited from a pretrained language model~\cite{rohrbach2018object,wu2022overcoming,lee2023volcano,guan2024hallusionbench,leng2024mitigating}.
During autoregressive decoding, early mismatches between text and vision can be amplified, leading the model to commit to visually unsupported content.

\begin{figure}[t]
    \centering
    \begin{subfigure}[b]{0.3\textwidth}
        \centering
        \includegraphics[width=0.9\textwidth]{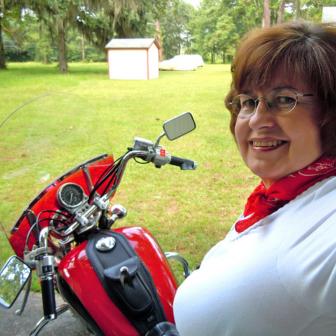}
        \caption{}
    \end{subfigure}
    \hfill
    \begin{subfigure}[b]{0.3\textwidth}
        \centering
        \includegraphics[width=0.9\textwidth]{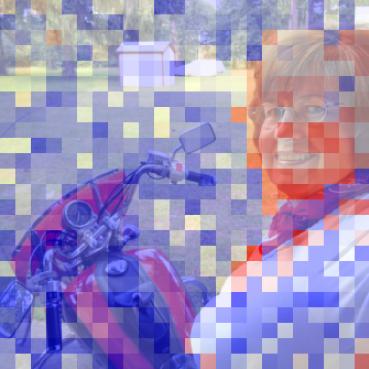}
        \caption{}
    \end{subfigure}
    \hfill
    \begin{subfigure}[b]{0.3\textwidth}
        \centering
        \includegraphics[width=0.9\textwidth]{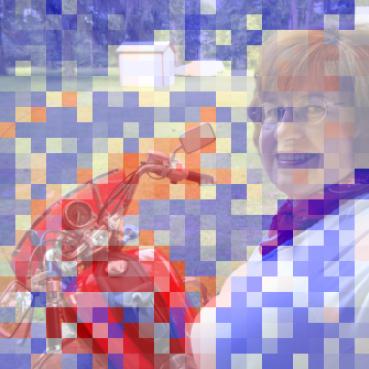}
        \caption{}
    \end{subfigure}
    \caption{\textbf{Visualization of token-image similarity.} Regions highlighted in \colorbox{red!30}{red} indicate higher relevance between generated tokens and visual content. Given visual input (a) and the query \textit{“Please describe the image in detail”}, region (b) is most associated when generating \textit{“woman”}, whereas region (c) is most relevant when generating \textit{“motorcycle”}.}
    \label{fig:vis}
\end{figure}

\begin{figure}[t]
    \centering
    \begin{subfigure}[c]{0.31\textwidth}
        \centering
        \includegraphics[width=0.9\textwidth]{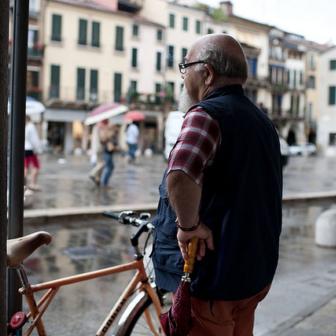}
        \caption{}
    \end{subfigure}
    \hfill
    \begin{subfigure}[c]{0.65\textwidth}
        \centering
        \includegraphics[width=0.9\textwidth]{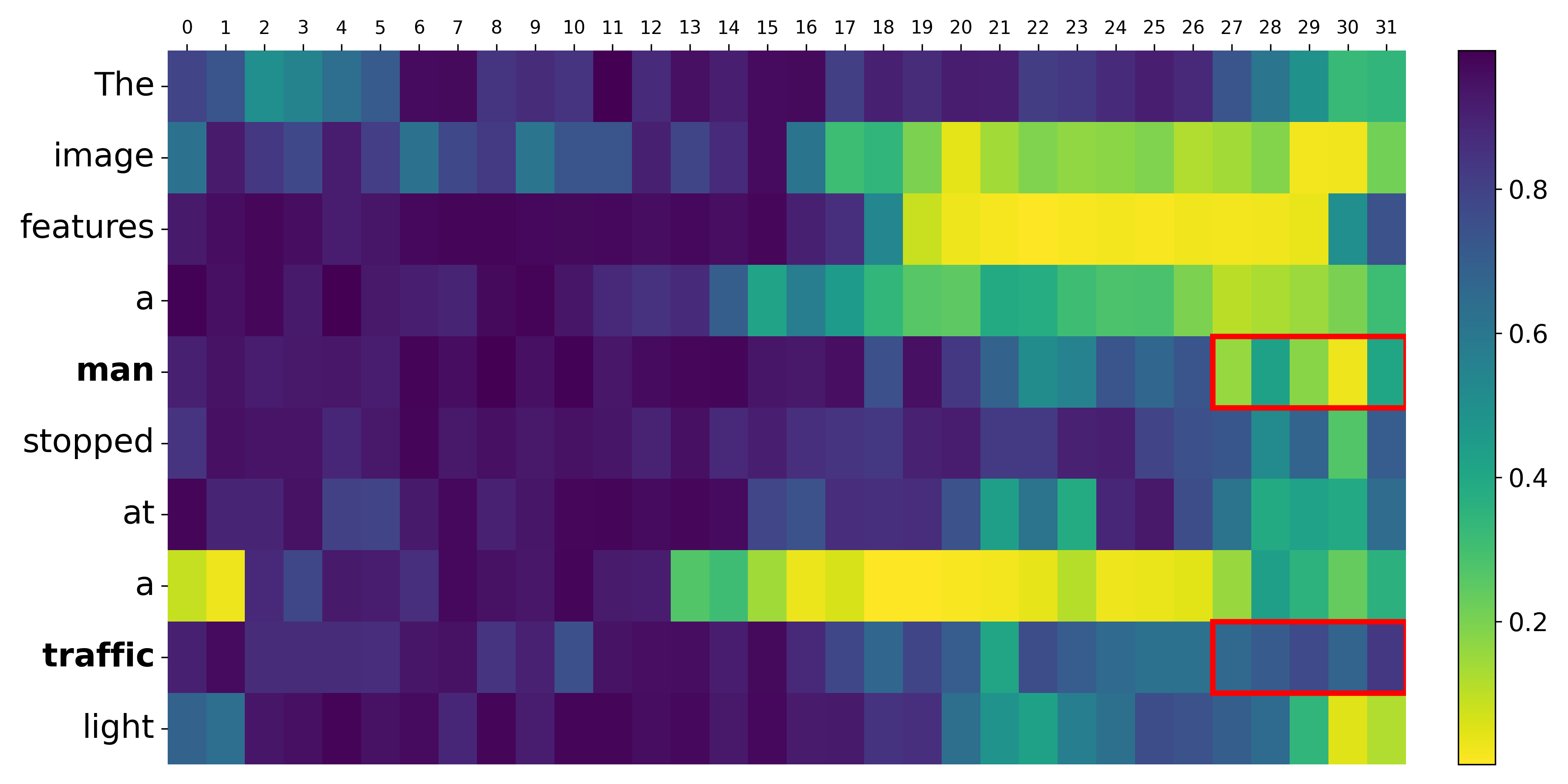}
        \caption{}
    \end{subfigure}
    \caption{Given visual input (a) and the query \textit{“Please describe the image in detail”}, (b) shows \textbf{the evolution of token entropy across decoding layers}. In deeper layers, hallucination-prone tokens (e.g., \textit{“traffic light”}) exhibit markedly higher entropy than grounded tokens (e.g., \textit{“man”}), whereas tokens dominated by language priors (e.g., \textit{“The”, “a”, “at”}) remain low-entropy.}
     \label{fig:entropy}
\end{figure}

Existing strategies for mitigating hallucinations broadly fall into training-based and training-free approaches.
Training-based methods rely on curated datasets~\cite{liu2023mitigating,yue2024less,yu2024hallucidoctor} for fine-tuning~\cite{chen2023mitigating,jiang2024hallucination,yue2024less} or reinforcement learning~\cite{sun2023aligning,yu2024rlhf,zhao2023beyond}, aiming to correct biases via additional supervision.
In contrast, training-free methods~\cite{leng2024mitigating,favero2024multi,liu2024paying,chen2025ict,an2025mitigating,liu2025reducing,zou2025memvr,wan2025only} operate purely at inference time to re-balance vision and language signals, offering a practical remedy without retraining cost.

A common design in training-free methods is to \emph{uniformly} boost visual signals, e.g., increasing attention to image tokens across decoding steps and tokens.
While effective in some cases, uniform amplification can also elevate irrelevant regions and spurious correlations, introducing new errors and degrading fluency.
This motivates a \emph{non-interference} requirement for inference-time intervention: the model does not need to ``see more'' everywhere; it should \emph{see only when needed}, strengthening visual grounding only for tokens at risk while leaving confident tokens and irrelevant regions essentially unchanged.

Our analysis suggests that such selective intervention should follow two regularities.
\emph{First}, visual relevance is \emph{token-specific}: different words should attend to different regions (Fig.~\ref{fig:vis}).
In the same image and prompt, ``woman'' and ``motorcycle'' correspond to distinct regions, indicating that strengthening vision must be spatially targeted rather than global.
\emph{Second}, hallucination risk is \emph{depth- and uncertainty-dependent}: in deeper decoding layers, hallucination-prone tokens (e.g., ``traffic light'') exhibit substantially higher predictive entropy than grounded content tokens (e.g., ``man''), whereas function words dominated by language priors (e.g., ``The'', ``a'', ``at'') remain low-entropy (Fig.~\ref{fig:entropy}).
This makes token entropy a natural \emph{risk proxy} for triggering intervention and, equally importantly, a mechanism to explicitly control \emph{when not to intervene}.
Together, these observations indicate that an effective remedy must be selective along two axes:
(i) \emph{where} to look---localize regions by token-specific relevance; and
(ii) \emph{when} to intervene---activate intervention only under high uncertainty and in deeper layers where visual grounding tends to degrade.

Based on these principles, we propose \textbf{C}ontext-aware \textbf{A}ttention \textbf{I}ntervention (\textbf{CAI}), a \emph{training-free} mechanism that performs \emph{minimal} and \emph{token-conditioned} attention intervention at inference time to enforce ``see only when needed''.
At each decoding step, \ours derives token-specific visual relevance from early-layer representations to locate semantically aligned regions, and then \emph{conditionally} applies an attention tilt toward these regions only for uncertainty-spiking tokens in deeper layers, leaving low-entropy tokens and shallow layers largely untouched.
As an optional add-on, we also report results with contrastive decoding following PAI~\cite{liu2024paying} to further suppress text-only biases; importantly, CAI itself yields consistent improvements without relying on decoding tricks.
Empirical evaluations on LLaVA-1.5~\cite{liu2024improved}, InstructBLIP~\cite{dai2023instructblip}, and Qwen-VL~\cite{bai2023qwen} show that \ours consistently reduces hallucinations and improves performance across benchmarks.

Beyond empirical gains, we provide a principled characterization of CAI to address \emph{why} and \emph{when} such selective intervention is beneficial.
Specifically, CAI can be viewed as a KL-minimal attention reweighting that increases token--image relevance with bounded perturbation; for high-uncertainty tokens, small tilts reduce the negative log-likelihood under an alignment condition between the induced visual shift and the local descent direction; and depth/entropy gating yields bounded (often negligible) interference when the gate is inactive.

\noindent\textbf{Contributions.}
(1) We introduce CAI, a training-free \emph{two-axis selective} attention intervention that decides \emph{where} to look (token-specific visual relevance) and \emph{when} to intervene (uncertainty- and depth-gated), enforcing a non-interference ``see only when needed'' principle.
(2) We present an early-to-late grounding transfer mechanism: early-layer relevance localizes aligned regions, while intervention is applied only in deeper layers where visual grounding degrades, avoiding uniform amplification and preserving fluency.
(3) We provide a principled theoretical characterization of CAI as a KL-minimal attention reweighting, together with conditions under which small tilts improve likelihood for high-uncertainty tokens and guarantees of bounded interference when the gate is inactive.

%% file: sections/2_related.tex
\section{Related Work}
\label{sec:related_work}


\noindent\textbf{Large Vision-Language Models (LVLMs).}
In recent LVLMs~\cite{liu2023visual,dai2023instructblip,bai2023qwen,zhu2023minigpt,ye2023mplug}, vision-language integration enables Large Language Models (LLMs)~\cite{brown2020language,ouyang2022training,touvron2023llama,chowdhery2023palm,bai2023qwen} to extend their reasoning beyond text by incorporating visual information. Images are first processed by a vision encoder into embeddings that are aligned with the LLM’s textual space~\cite{li2019visualbert,sun2019videobert,li2023blip}, allowing the model to interpret visual content~\cite{li2023blip}, answer questions~\cite{liu2023visual,dai2023instructblip}, and generate multimodal outputs~\cite{huang2025vision,liu2025visual,zhou2025r1,tan2025reason,shen2025vlm}. This integration effectively empowers LLMs to perform tasks that require understanding both language and vision, bridging the gap between seeing and reasoning.
Despite their capabilities, LVLMs often suffer from object hallucination~\cite{li2023evaluating,zhou2023analyzing,liu2024survey,bai2024hallucination}, describing objects that are not present in the image. Such errors reduce reliability, particularly in tasks requiring precise visual understanding. Rather than changing architectures or retraining on curated data, we act \textit{at inference time}. 
Our approach selectively reinforces attention to token-relevant visual evidence, improving fidelity while remaining training-free and plug-and-play across LVLM backbones.

\noindent\textbf{Mitigating hallucinations in LVLMs.}
Recent LVLM research has highlighted object hallucination as a persistent challenge. Hallucinations arise from both statistical biases in large-scale training data, such as frequently occurring objects and common object co-occurrences~\cite{rohrbach2018object,li2023evaluating,zhou2023analyzing}, and model-intrinsic factors, notably the reliance on language priors from pretrained language models~\cite{rohrbach2018object,wu2022overcoming,lee2023volcano,guan2024hallusionbench,leng2024mitigating}. LVLM outputs often align with language priors despite contradicting visual evidence. Strategies to mitigate hallucinations generally fall into training-based and training-free approaches. Training-based methods employ curated datasets~\cite{liu2023mitigating,yue2024less,yu2024hallucidoctor} for fine-tuning~\cite{chen2023mitigating,jiang2024hallucination,yue2024less} or reinforcement learning~\cite{sun2023aligning,yu2024rlhf,zhao2023beyond} to reduce bias-induced errors, but their high computational cost limits scalability. In contrast, training-free methods~\cite{leng2024mitigating,favero2024multi,liu2024paying,chen2025ict,an2025mitigating,liu2025reducing,zou2025memvr,wan2025only} intervene at inference time, enhancing visual grounding to counteract the model’s over-reliance on language priors, offering a more efficient alternative for improving output fidelity.
Inspired by Neo et al.~\cite{neo2024towards}, who show that visual information is localized near object tokens, we propose a \emph{token-level}, \emph{depth- and uncertainty-gated} method: it identifies token-image relevance and activates only when predictive entropy spikes in deeper layers. 
This design strengthens grounding precisely where needed and complements contrastive decoding to counter language-prior bias.

%% file: sections/3_preliminary.tex
\section{Preliminary}
\label{sec:preliminary}
LVLMs generalize Large Language Models (LLMs) to enable joint reasoning over textual and visual modalities. A vision encoder extracts visual features $\mathbf{v} = \left[v_1, \dots, v_{N_v}\right]$ from an input image, while a language model encodes textual input into query tokens $\mathbf{x} = \left[x_1, \dots, x_{N_x}\right]$. These modalities are integrated through mechanisms such as multilayer perceptrons~\cite{liu2024improved} or Q-Formers~\cite{dai2023instructblip}, generating a compact representation that conditions the language model. This fused representation facilitates autoregressive generation, formalized as:
\begin{equation}
        y_t\sim p\left(y_t|\mathbf{v},\mathbf{x},\mathbf{y}_{<t}\right)=\mathcal{S}\left(f_\theta\left(y_t|\mathbf{v},\mathbf{x},\mathbf{y}_{<t}\right)\right).
\label{eq:autoregressive}
\end{equation}
where $y_t$ denotes the token generated at step $t$, $\mathbf{y}_{<t}$ represents the preceding token sequence, and $\mathcal{S}$ is the softmax operator over the vocabulary. Here, $f_\theta$ corresponds to the language model parameterized by $\theta$. The model is implemented as a stack of transformer blocks, each comprising multi-head self-attention ($\mathrm{MHA}$) and a feed-forward network ($\mathrm{FFN}$). The attention operation for head $n$ is:
\begin{equation}
    \mathrm{Attn}_n(h)=\mathcal{S}\left(\mathbf{A}_n\right)V_n, \quad \mathbf{A}_n=\frac{Q_nK_n^\top}{\sqrt{d_k}}.
\label{eq:self_attn}
\end{equation}
where $Q_n,K_n,V_n\in\mathbb{R}^{N\times d_k}$ are the query, key, value projections of the hidden state $h$, and $d_k$ is the key dimensionality, $N=N_x+N_v$ represents the total number of multimodal tokens. The attention weights $\textbf{A}_n\in\mathbb{R}^{N\times N}$ capture token-to-token dependencies, facilitating contextual feature mixing. 
The outputs of all $H$ heads are concatenated and projected through an output matrix $W_o$:
\begin{equation}
    \mathrm{MHA}(h)=\textrm{Concat}\left(\mathrm{Attn}_1\left(h\right),\cdots,\mathrm{Attn}_H\left(h\right)\right)\cdot W_o.
\label{eq:multi_head}
\end{equation}
Finally, each transformer block applies an $\mathrm{FFN}$ to the $\mathrm{MHA}$ output, introducing nonlinear transformations that enhance contextual embeddings.

%% file: sections/4_methods.tex
\section{Method}
\label{sec:method}

\begin{figure}[t]
    \centering
    \includegraphics[width=0.95\linewidth]{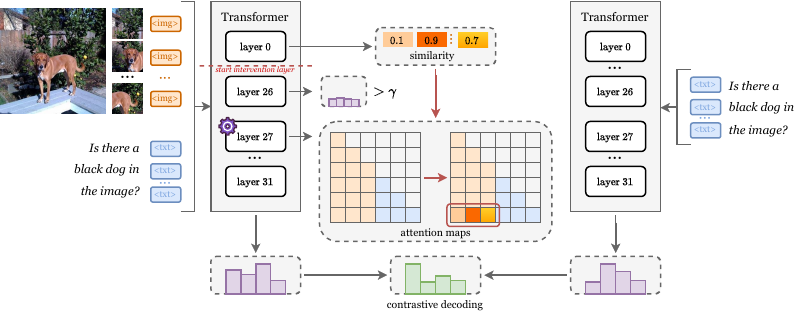}
    \caption{
    \textbf{Overview of CAI.}
    At each decoding step, the lowest layer evaluates the similarity between the generated token and each patch token~(Eq.~\ref{eq:similarity}), yielding a visual grounding signal.
    Attention interventions are applied to deep decoding layers with high entropy~(Eq.~\ref{eq:condition}), redirecting attention toward patch tokens in proportion to their similarity~(Eq.~\ref{eq:intervention}).
    Integrated with contrastive decoding~(Eq.~\ref{eq:decoding}), CAI mitigates reliance on language priors and reinforces visual grounding to reduce hallucinations.
    }
    \label{fig:cai}
\end{figure}

In this section, we present \textbf{Context-aware Attention Intervention (CAI)}, a training-free approach to mitigate hallucinations at inference time.
Previous training-free methods~\cite{leng2024mitigating,favero2024multi,liu2024paying,chen2025ict,an2025mitigating,liu2025reducing,wan2025only} typically intervene \emph{uniformly} across visual tokens, which may introduce noise and spurious correlations from irrelevant regions.
In contrast, \ours enforces a \emph{``see only when needed''} behavior by combining three components:
(i) a token-image similarity module that locates semantically relevant regions for each decoding token;
(ii) an entropy- and depth-gated attention intervention that selectively amplifies attention to those regions only when hallucination risk is high; and
(iii) an optional contrastive decoding step that further suppresses language-only continuations.
The overall pipeline of \ours is illustrated in Fig.~\ref{fig:cai}, and summarized in Algorithm~\ref{alg}.

\subsection{Token-Aware Visual Grounding}
\subsubsection{Token-image similarity.}
At step $t$, \ours measures the similarity between the current text token and the visual patch tokens. Specifically, we extract the hidden state of the last token, $h^0_t\in\mathbb{R}^{d_h}$, from the $0$-th layer, where representations remain relatively disentangled from higher-order semantic abstractions (as shown in Section~\ref{sec:experiments}).
We then compute the inner product between this $L2$-normalized hidden state and the $N_v$ visual patch features, $\mathbf{v}\in\mathbb{R}^{N_v\times d_h}$, to derive the grounding weights:

\begin{equation}
    \mathbf{w}^i_t=\sigma\left(\hat{\mathbf{v}}_i^\top\cdot \hat{h}^0_t\right),\quad \textrm{where}\quad\hat{h}_t^0=\frac{h_t^0}{\|h_t^0\|_2},\ \hat{\mathbf{v}}_i=\frac{\mathbf{v}_i}{\|\mathbf{v}_i\|_2}.
\label{eq:similarity}
\end{equation}
Here, $\sigma(\cdot)$ denotes the sigmoid function, which constrains the similarity scores to $(0,1)$, yielding $\mathbf{w}_t\in\mathbb{R}^{N_v}$.
These weights capture fine-grained text-vision alignment and provide a stable grounding signal to guide attention in deeper layers, enhancing multimodal consistency.


\subsubsection{Similarity-guided attention intervention.}
Based on the similarity $\mathbf{w}_t$, \ours performs an attention intervention on the image regions indexed by $[i_s,i_e)$ during the generation of the $t$-th token:
\begin{equation}
    \mathbf{A}_{t,i_s:i_e}
    =
    \mathbf{A}_{t,i_s:i_e}
    +
    \left|\mathbf{A}_{t,i_s:i_e}\right|\odot\mathbf{w}_t,
\label{eq:intervention}
\end{equation}
where $\mathbf{A}\in\mathbb{R}^{H\times N\times N}$ denotes the multi-head attention.
The slice $\mathbf{A}_{t,i_s:i_e}\in\mathbb{R}^{H\times N_v}$, with $N_v=i_e-i_s$, represents the attention weights from the $t$-th token to $N_v$ image tokens across $H$ heads.
The operator $\odot$ indicates element-wise multiplication.
By scaling the attention magnitude $\left|\mathbf{A}_{t,i_s:i_e}\right|$ with the similarity $\mathbf{w}_t$, the intervention amplifies attention proportionally to semantic relevance.
This mechanism reinforces context-aware visual grounding by directing the model’s focus toward the image regions that are most pertinent to the token under generation.

\subsection{Entropy- and Depth-Gated Intervention}
The indiscriminate application of interventions across all tokens and layers risks degrading multimodal representations that are already adequately grounded. To mitigate this, \ours employs a dual-gating mechanism:
\begin{equation}
    -\sum p(h_t^l)\log p(h_t^l)>\gamma,\quad \textrm{subject to}\quad l>l_s.
\label{eq:condition}
\end{equation}
First, interventions are strictly confined to layers deeper than $l_s$, where visual signals typically degrades. Second, \ours selectively targets tokens with high hallucination risk by evaluating their predictive uncertainty. Specifically, we project the layer-normalized $h_t^l$ through the language modeling head ($\textrm{LMHead}$) to compute the predictive distribution: $p(h_t^l)=\textrm{Softmax}\left(\textrm{LMHead}\left(\textrm{LayerNorm}\left(h_t^l\right)\right)\right)$. Interventions are triggered only if the entropy of this distribution exceeds $\gamma$.
By conditioning on both layer depth and entropy, \ours focuses reinforcement on hallucination-prone tokens, ensuring that interventions are precise and effective.
We further analyze this entropy- and depth-gated design in Section~\ref{sec:theory}, showing that CAI implements a KL-minimal attention tilt and yields provable gains for high-entropy tokens under small tilts.

\begin{algorithm}[!t]
\renewcommand{\algorithmicrequire}{\textbf{Input:}}
\renewcommand{\algorithmicensure}{\textbf{Output:}}
\caption{Context-aware Attention Intervention (CAI)}
\label{alg}
\begin{algorithmic}[1]
\REQUIRE Transformer layers $L$, query embedding $\mathbf{x}$, image feature $\mathbf{v}$, image start token $i_s$, image end token $i_e$, start intervention layer $l_s$, entropy threshold $\gamma$, decoding coefficient $\lambda$.
\ENSURE Response token $y_t$ at decoding step $t$.
\FOR{$l\in[0,L)$}
    \IF{$l=0$}
        \STATE $\mathbf{w}_t\leftarrow\sigma(\mathbf{v}\cdot h_t^l)$.\quad \textcolor{eccvblue}{$\%$ token-image similarity}
    \ENDIF
    \STATE \textcolor{eccvblue}{$\%$ conditional attention intervention}
    \IF{$l\ge l_s$ \textbf{and} $-\sum p(h_t^{l-1})\log p(h_t^{l-1})>\gamma$}
        \STATE $\mathbf{A}_{t,i_s:i_e}^l\leftarrow\mathbf{A}_{t,i_s:i_e}^l+\left|\mathbf{A}_{t,i_s:i_e}^l\right|\odot\mathbf{w}_t$.
    \ENDIF
    \STATE $\bar{h}_t^l\leftarrow h_t^l+\mathrm{MHA}_l(h_t^l)$.
    \STATE $h_t^{l+1}\leftarrow h_t^l+\mathrm{FFN}_l(\bar{h}_t^l)$.
\ENDFOR
\STATE $p(y_t|\mathbf{v},\mathbf{x})\leftarrow\lambda\, p(y_t|\mathbf{v},\mathbf{x})-(\lambda-1)\,p(y_t|\mathbf{x})$.\quad \textcolor{eccvblue}{$\%$ contrastive decoding}
\end{algorithmic}
\end{algorithm}

\subsection{Contrastive Decoding}
In CAI, contrastive decoding is introduced as an optional component. Following PAI~\cite{liu2024paying}, hallucinations from over-reliance on language priors are mitigated by contrasting multimodal prediction $p(y_t|\mathbf{v},\mathbf{x})$ with text-only prediction $p(y_t|\mathbf{x})$:
\begin{equation}
    \hat{p}(y_t|\mathbf{v},\mathbf{x})
    =
    \lambda\, p(y_t|\mathbf{v},\mathbf{x})
    -
    (\lambda-1)\, p(y_t|\mathbf{x}),
\label{eq:decoding}
\end{equation}
where $\lambda$ is a contrastive decoding coefficient.
This penalizes text-only hypotheses to encourage visual grounding.
In contrast to PAI~\cite{liu2024paying}, where attention intervention alone yields limited impact and contrastive decoding drives the significant gains reported in Section~\ref{sec:experiments}, our \ours achieves substantial hallucination mitigation with the standalone attention intervention mechanism (CAI$^\dagger$, i.e., $\lambda=1$), making contrastive decoding optional.

%% file: sections/5_analysis.tex
\section{Theoretical Analysis}
\label{sec:theory}

We next provide a formal analysis of \ours by viewing it as an \emph{exponential tilt} of the baseline attention.
Under this lens, CAI enjoys four desirable properties that mirror our design principles.
First, among all ways of modifying attention within a fixed KL budget, the CAI update is the \emph{KL-minimal} reweighting that most increases expected token–image similarity (Theorem~\ref{thm:kl-minimal}).
Second, for high-entropy tokens, small tilts that shift visual evidence in the NLL descent direction provably reduce the loss (Theorem~\ref{thm:entropy-gain}), matching our entropy-gated intervention.
Third, under a simple model where visual information decays with depth, later layers yield larger marginal benefits from reinforcement (Theorem~\ref{thm:depth-decay}), justifying depth-gated intervention.
Finally, for small tilts or weak similarity, the change in attention is tightly bounded in total variation (Theorem~\ref{thm:bounded-risk}), ensuring that CAI behaves as a low-risk, ``see only when needed'' modification.

\paragraph{Setup.}
Let $a\in\Delta^{N-1}$ be the attention over $N$ visual tokens at the current step and $s\in\mathbb{R}^N_{\ge0}$ the token-image similarity.
When gated by $\mathbb{I}[H_t^{(l-1)}>\gamma]\cdot\mathbb{I}[\,l\ge l_0\,]$, \ours applies a multiplicative tilt:
\[
\tilde a_i \;=\; \frac{a_i\,\exp(\lambda s_i)}{\sum_{j} a_j\,\exp(\lambda s_j)} ,\qquad \lambda\ge 0,
\]
else $\tilde a=a$.
Let $x(a)=\sum_i a_i v_i$ be the aggregated visual evidence and logits $z_y = u_y + w_y^\top x(a)$, with NLL $\mathcal{L}(a)=-\log\mathrm{softmax}(z)_{y^\star}$.

\begin{theorem}[KL-minimality of CAI tilting]
\label{thm:kl-minimal}
For any baseline $a$ and similarity $s$, the distribution
$q^\star(\lambda)\propto a\odot \exp(\lambda s)$ uniquely solves
\(
\max_{q\in\Delta^{N-1}} \ \mathbb{E}_{q}[s] \ \\ \text{s.t.}\ D_{\mathrm{KL}}(q\,\|\,a)\le \varepsilon
\)
for some $\lambda\ge 0$ meeting the KL budget.
Thus, CAI realizes the least-change reweighting that raises expected relevance.
\end{theorem}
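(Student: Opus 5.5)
We treat the statement as a concave program over the simplex and solve it with Lagrangian duality, i.e.\ the Gibbs variational principle. The objective $q\mapsto\mathbb{E}_q[s]=\langle q,s\rangle$ is linear. The constraint set $\{q\in\Delta^{N-1}: D_{\mathrm{KL}}(q\|a)\le\varepsilon\}$ is convex, compact and, because of the KL term, contained in the face $\{q:\operatorname{supp}q\subseteq\operatorname{supp}a\}$. So we may restrict to $\operatorname{supp}a$ from the start. This matches the fact that $q^\star\propto a\odot e^{\lambda s}$ also vanishes off $\operatorname{supp}a$.

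For $\varepsilon>0$, Slater's condition holds with $q=a$, so strong duality applies. The Lagrangian for multiplier $\mu>0$ is
\[
\langle q,s\rangle-\mu\bigl(D_{\mathrm{KL}}(q\|a)-\varepsilon\bigr)=\mu\Bigl(\log Z(\lambda)-D_{\mathrm{KL}}(q\|q^\star(\lambda))\Bigr)+\mu\varepsilon ,
\]
with $\lambda=1/\mu$ and $Z(\lambda)=\sum_j a_je^{\lambda s_j}$. This identity is the Donsker--Varadhan/Gibbs computation, checked by expanding $\log(q_i/q^\star_i)$. It shows that for fixed $\mu$ the inner maximizer over the simplex is exactly $q^\star(\lambda)$, and that this maximizer is unique by strict positivity of KL. The remaining task is to pick $\lambda$ by complementary slackness, so that $D_{\mathrm{KL}}(q^\star(\lambda)\|a)=\varepsilon$.

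For this we study $\phi(\lambda)=D_{\mathrm{KL}}(q^\star(\lambda)\|a)=\lambda\,\mathbb{E}_{q^\star(\lambda)}[s]-\log Z(\lambda)$. It is continuous, $\phi(0)=0$, and $\phi'(\lambda)=\lambda\operatorname{Var}_{q^\star(\lambda)}(s)\ge0$. We also need $\frac{d}{d\lambda}\mathbb{E}_{q^\star}[s]=\operatorname{Var}_{q^\star}(s)$. Hence if $s$ is non-constant on $\operatorname{supp}a$, $\phi$ is strictly increasing. Its limit as $\lambda\to\infty$ is $-\log a(\arg\max s)$, the mass $a$ places on the maximal-similarity set $M=\arg\max_{\operatorname{supp}a}s$. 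For $\varepsilon$ below that limit, the intermediate value theorem gives a unique $\lambda\ge0$ with $\phi(\lambda)=\varepsilon$.

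Optimality then follows by a direct argument that bypasses the duality machinery. For any feasible $q$,
\[
\lambda\langle q,s\rangle\le\log Z(\lambda)+D_{\mathrm{KL}}(q\|a)-D_{\mathrm{KL}}(q\|q^\star)\le\log Z(\lambda)+\varepsilon=\lambda\langle q^\star,s\rangle .
\]
Equality forces $D_{\mathrm{KL}}(q\|q^\star)=0$, i.e.\ $q=q^\star$, which gives uniqueness whenever $\lambda>0$.

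The main obstacle is the degenerate regimes, which the statement's phrase ``for some $\lambda\ge0$ meeting the KL budget'' glosses over. If $\varepsilon=0$ or $s$ is constant on $\operatorname{supp}a$, then $\lambda=0$ and $q^\star=a$. In the constant-$s$ case uniqueness holds only if we read ``solves'' as ``$q^\star$ is an optimizer'' (every feasible $q$ ties), or we must assume $s$ is non-constant. If $\varepsilon\ge-\log a(M)$, the supremum is attained by $a$ conditioned on $M$, which is the $\lambda\to\infty$ limit and not a finite tilt; if $|M|>1$ the optimizer is not unique either. I would state these as explicit hypotheses: $\varepsilon\in(0,-\log a(M))$ and $s$ non-constant on $\operatorname{supp}a$. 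Under them the argument above gives existence and uniqueness, and the KL-minimality reading (least change for a given gain in relevance) follows from the same inequality by swapping objective and constraint.
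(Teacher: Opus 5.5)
Your proof is correct, but it takes a partly different route from the paper's.

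\textbf{The paper's route.} It derives the tilt from first-order KKT stationarity: multipliers $\eta$ for the KL constraint and $\tau$ for normalization, then $\lambda=1/\eta$. It asserts uniqueness from strict convexity of $D_{\mathrm{KL}}(\cdot\Vert a)$ against a linear objective. It then selects $\lambda$ via $\frac{d}{d\lambda}D_{\mathrm{KL}}(q^\star(\lambda)\Vert a)=\lambda\,\mathrm{Var}_{q^\star(\lambda)}[s]\ge 0$.

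\textbf{Your route.} You share that last step, but you replace the stationarity derivation with the Gibbs identity $\lambda\langle q,s\rangle=\log Z(\lambda)+D_{\mathrm{KL}}(q\Vert a)-D_{\mathrm{KL}}(q\Vert q^\star)$. This is an equality, so your first ``$\le$'' may be written as ``$=$''. It gives a one-line certificate of global optimality and uniqueness.

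\textbf{What each buys.} The paper's derivation shows where the exponential form comes from. However, it yields only necessary conditions, and it tacitly assumes an interior optimum (it takes $\log q_i$) and $\eta>0$. Your argument needs no constraint qualification or KKT sufficiency. It handles zeros of $a$ by restricting to $\mathrm{supp}\,a$. By swapping objective and constraint, it also proves the ``least-change'' reading, which the theorem asserts but the paper never argues.

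Your range analysis of $\phi$ is the substantive addition. The paper's ``$\ge 0$'' is not strict monotonicity, and it never checks that $\varepsilon$ lies in the range $[0,-\log a(M))$ of $\phi$. So when $\varepsilon$ is given first, the claim ``for any $a$ and $s$'' really does need your hypotheses. For $s$ constant on $\mathrm{supp}\,a$, the paper's strict-convexity uniqueness argument fails outright, because every feasible $q$ ties.

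Two small remarks:
\begin{itemize}
\item Under the reverse reading (fix $\lambda>0$ and set $\varepsilon:=\phi(\lambda)$), your certificate proves the claim with no extra hypotheses.
\item At $\varepsilon=-\log a(M)$ exactly, the optimizer $a(\cdot\mid M)$ is unique even when $|M|>1$. Any $q$ supported on $M$ satisfies $D_{\mathrm{KL}}(q\Vert a)=D_{\mathrm{KL}}(q\Vert a(\cdot\mid M))-\log a(M)$, so non-uniqueness occurs only for $\varepsilon>-\log a(M)$.
\end{itemize}
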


\begin{theorem}[Entropy-gated improvement]
\label{thm:entropy-gain}
Assume the linearized logit model $z_y = u_y + w_y^\top x(a)$ and local smoothness of $\log\sum_y e^{z_y}$.
Let $H=-\sum_y p_y\log p_y$ with $p=\mathrm{softmax}(z)$.
If $H\ge H_0>0$ and the CAI direction aligns with the NLL descent, i.e.,
$\langle g(a),\,\Delta x\rangle>0$ where $g(a)=\sum_y (p_y-\mathbb{1}[y{=}y^\star])w_y$ and $\Delta x=x(\tilde a)-x(a)$,
then there exists $\lambda_0>0$ such that for all $0<\lambda\le \lambda_0$,
\(
\mathcal{L}(\tilde a) < \mathcal{L}(a).
\)
Hence high-entropy tokens are the regime where CAI is guaranteed to help for sufficiently small tilts.
\end{theorem}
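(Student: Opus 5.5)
We would prove Theorem~\ref{thm:entropy-gain} with a first-order Taylor expansion of the NLL along the one-parameter curve $\lambda\mapsto \tilde a(\lambda)$, and then control the second-order remainder using the smoothness of the log-partition function. Under the linearized logit model, $\mathcal{L}(a)=\log\sum_y e^{z_y}-z_{y^\star}$ is a function of $x=x(a)$ alone. Its gradient in $x$ is exactly $\nabla_x\mathcal{L}=\sum_y(p_y-\mathbb{1}[y{=}y^\star])w_y=g(a)$, and its Hessian is $\sum_y p_y w_yw_y^\top-\bar w\bar w^\top$ with $\bar w=\sum_y p_y w_y$. This Hessian is the covariance of $w_Y$ under $Y\sim p$, so its operator norm is at most $M:=\max_y\|w_y\|^2$; this gives the needed local smoothness explicitly.

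\emph{Step 1: expand the visual shift.}
Differentiating the tilt at $\lambda=0$ gives $\partial_\lambda\tilde a_i|_{\lambda=0}=a_i(s_i-\mathbb{E}_a[s])$. Hence
\[
\Delta x=\lambda c+r(\lambda),\qquad c:=\sum_i a_i\,(s_i-\mathbb{E}_a[s])\,v_i=\mathrm{Cov}_a(v,s),\qquad \|r(\lambda)\|\le C_1\lambda^2 .
\]
The constant $C_1$ depends only on $\max_i\|v_i\|$ and $\max_i s_i$, because $\tilde a$ is a smooth (softmax-type) function of $\lambda$ with uniformly bounded second derivative. In particular $\|\Delta x\|\le C_2\lambda$.

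\emph{Step 2: expand the loss.}
By Taylor's theorem with the Hessian bound $M$,
\[
\mathcal{L}(\tilde a)-\mathcal{L}(a)\le \langle g(a),\Delta x\rangle+\tfrac{M}{2}\|\Delta x\|^2 .
\]
Substituting Step 1 gives
\[
\mathcal{L}(\tilde a)-\mathcal{L}(a)\le \lambda\langle g(a),c\rangle+\big(\|g(a)\|C_1+\tfrac{M}{2}C_2^2\big)\lambda^2 .
\]
The alignment hypothesis has to be read consistently with this expansion. ``Aligns with the NLL descent'' means $\Delta x$ has positive inner product with the descent direction $-g(a)$. We will therefore state it as a condition on the tilt's leading direction: $\kappa:=-\langle g(a),c\rangle>0$. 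Equivalently, the stated condition holds for $-g$, and it holds uniformly as $\lambda\downarrow0$ after dividing by $\lambda$. Then
\[
\lambda_0:=\frac{\kappa}{\|g(a)\|C_1+\tfrac{M}{2}C_2^2}
\]
makes the right-hand side strictly negative for all $0<\lambda\le\lambda_0$ (any $\lambda_0$ strictly below this value works).

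\emph{Step 3: role of the entropy gate.}
$H\ge H_0>0$ means $p$ is not a point mass. Hence $\|g(a)\|$ is bounded away from zero, since $g(a)=0$ would require $p=e_{y^\star}$ generically, and the linear term is then genuinely first order rather than degenerate. We will also make $\lambda_0$ explicit in terms of $H_0$, using $1-p_{y^\star}\ge$ a function of $H$. This shows the guaranteed regime is non-vacuous precisely for uncertain tokens: for near-zero-entropy tokens, $g(a)\approx 0$ and no first-order gain is available.

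\emph{Main obstacle.}
The delicate point is Step 2's bookkeeping between the hypothesis, stated at a fixed $\lambda$ via $\Delta x$, and the first-order direction $c$. The proof needs a nondegenerate leading-order alignment $\kappa>0$. A condition only at the given $\lambda$ could be $o(\lambda)$ and be swamped by the quadratic remainder. We will first attempt to derive $\kappa>0$ from the stated condition by taking $\lambda\to0$. If that fails, we will make explicit, as a clarification of the hypothesis, the sign convention (descent direction $-g$) and the assumption that the alignment is of order $\lambda$.
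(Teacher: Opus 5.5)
Your argument is correct and follows the paper's own route. The paper applies the descent lemma to the NLL viewed as a function of $x$ and uses $\Delta x=O(\lambda)$ so that a negative linear term beats the quadratic remainder. It likewise reads the alignment as descent, $\langle g(a),\Delta x\rangle<0$, despite the $>0$ in the statement. Your explicit smoothness constant $\max_y\|w_y\|^2$ (the Hessian is $\mathrm{Cov}_p(w)$) and your leading-order condition $\kappa=-\langle g(a),\mathrm{Cov}_a(v,s)\rangle>0$ close a point the paper glosses over: an alignment that is only $o(\lambda)$ need not dominate the $O(\lambda^2)$ remainder.

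Drop the claims in Step~3, though. The hypothesis $H\ge H_0$ does not bound $\|g(a)\|$ away from zero: degenerate $\{w_y\}$, or $w_{y^\star}$ lying in the convex hull of the other $w_y$, give $g(a)=0$ at high entropy. Nor does it yield $\lambda_0$ in terms of $H_0$. Conversely, low entropy does not force $g(a)\approx 0$ when the confident prediction is not $y^\star$, which is exactly the confident-hallucination case. The entropy hypothesis plays no logical role in the implication, and $\kappa>0$ by itself already gives $g(a)\neq 0$.
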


\begin{theorem}[Depth advantage via visual decay]
\label{thm:depth-decay}
Suppose the visual component of hidden states evolves as $x^{(l)} = M^{(l)}x^{(l-1)}$ with $\mathbb{E}\,\rho(M^{(l)})\le \rho<1$.
Then for $l\ge l_0$, $\|x^{(l)}\|\le \rho^{\,l-l_0}\|x^{(l_0)}\|$.
Therefore the signal-to-noise ratio of visual evidence decays geometrically with depth, making depth-gated intervention ($l\!\ge\! l_0$) yield larger marginal returns.
\end{theorem}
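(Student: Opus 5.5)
The plan is to unroll the linear recursion $x^{(l)} = M^{(l)}x^{(l-1)}$ from layer $l_0$ to $l$, so that $x^{(l)} = M^{(l)}M^{(l-1)}\cdots M^{(l_0+1)}x^{(l_0)}$. The norm bound then follows from submultiplicativity, provided we can control each factor by $\rho$. The hypothesis as stated bounds only the spectral radius in expectation, and spectral radius is not a norm and is not submultiplicative. So the first step is to make the assumption usable. I would interpret it, as the intended reading, as a bound on an operator norm: $\|M^{(l)}\|\le\rho$ for every $l>l_0$. Alternatively, it can be read as a bound on the spectral radius of each $M^{(l)}$, assuming the $M^{(l)}$ are normal (e.g.\ symmetric), so that $\rho(M^{(l)})=\|M^{(l)}\|_2$. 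I will state this strengthening explicitly and note that the expectation can be dropped in the deterministic case.

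With the norm bound in hand, the main step is a one-line induction on $l$. The base case $l=l_0$ is trivial. For the inductive step, $\|x^{(l)}\|\le\|M^{(l)}\|\,\|x^{(l-1)}\|\le\rho\cdot\rho^{\,l-1-l_0}\|x^{(l_0)}\|$. If one wants to keep the expectation, there are two options:
\begin{itemize}
\item Assume the $M^{(l)}$ are independent across layers and independent of $x^{(l_0)}$. Conditioning layer by layer and using the tower property then gives $\mathbb{E}\|x^{(l)}\|\le\rho^{\,l-l_0}\,\mathbb{E}\|x^{(l_0)}\|$.
\item Use the Gelfand formula to get an asymptotic version: for any $\rho'>\rho$, the bound holds with $\rho'$ up to a constant $C$.
\end{itemize}
I would present the deterministic norm version as the theorem's content and mention these variants in a remark.

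The interpretive consequence comes next. Suppose the non-visual (noise/language-prior) component of the hidden state has norm bounded below, say by $\eta>0$, in the residual stream. Then the ratio $\|x^{(l)}\|/\eta$ is at most $\rho^{\,l-l_0}\|x^{(l_0)}\|/\eta$, which decays geometrically. For the ``larger marginal returns'' claim, I would combine this with the setup of Theorem~\ref{thm:entropy-gain}. Reinforcing attention at layer $l$ adds a visual shift $\Delta x$ of roughly fixed size, since it depends on the tilt and on the similarities $s$ and not on depth. Its relative contribution $\|\Delta x\|/\|x^{(l)}\|$ is therefore at least $\|\Delta x\|\rho^{-(l-l_0)}/\|x^{(l_0)}\|$, which is increasing in $l$. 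This conclusion is heuristic, so I would phrase it as a corollary of the bound rather than a rigorous claim.

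The main obstacle is the gap between the stated spectral-radius-in-expectation hypothesis and the deterministic norm conclusion. A product of matrices, each with spectral radius below $1$, can still grow in norm. Handling this honestly is the crux: I would either strengthen the hypothesis to an operator-norm bound, or assume normality of the $M^{(l)}$, and flag this explicitly. The remaining steps are routine.
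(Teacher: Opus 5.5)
Your proposal is correct and follows the paper's proof: unroll the recursion, apply submultiplicativity, and replace the spectral-radius hypothesis by an operator-norm bound $\|M^{(l)}\|\le\rho<1$ (or by normality, where $\|M^{(l)}\|_2=\rho(M^{(l)})$), which is exactly what the paper's footnote does---and your normality justification is cleaner than the footnote's appeal to $\rho(M)\le\|M\|$, which runs in the wrong direction. One caution on your optional Gelfand remark: Gelfand's formula controls powers of a single matrix, not products of distinct $M^{(l)}$, so that asymptotic variant needs either $M^{(l)}\equiv M$ or the joint spectral radius of $\{M^{(l)}\}$ in place of $\rho$, as your own observation that such products can grow already implies.
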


\begin{theorem}[Non-interference under small tilts]
\label{thm:bounded-risk}
For small $\lambda$, $D_{\mathrm{KL}}(\tilde a\,\|\,a)=\tfrac{1}{2}\lambda^2\mathrm{Var}_{a}[s]+o(\lambda^2)$; by Pinsker, 
$\|\tilde a-a\|_{\mathrm{TV}}\le \sqrt{\tfrac12 D_{\mathrm{KL}}(\tilde a\,\|\,a)}$.
Thus, when the gate is off or $s$ is weak, CAI perturbs attention only negligibly.
\end{theorem}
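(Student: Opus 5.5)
The plan is to expand $D_{\mathrm{KL}}(\tilde a\,\|\,a)$ directly using the cumulant generating function of $s$ under $a$, and then apply Pinsker's inequality. Define $\psi(\lambda)=\log\sum_j a_j e^{\lambda s_j}=\log\mathbb{E}_a[e^{\lambda s}]$. Since $\tilde a_i=a_i e^{\lambda s_i-\psi(\lambda)}$, the KL divergence has the closed form
\[
D_{\mathrm{KL}}(\tilde a\,\|\,a)=\sum_i\tilde a_i\bigl(\lambda s_i-\psi(\lambda)\bigr)=\lambda\,\mathbb{E}_{\tilde a}[s]-\psi(\lambda)=\lambda\psi'(\lambda)-\psi(\lambda).
\]
The first step uses the standard exponential-family identity $\psi'(\lambda)=\mathbb{E}_{\tilde a}[s]$. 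Everything then reduces to a one-variable function $F(\lambda)=\lambda\psi'(\lambda)-\psi(\lambda)$, which is smooth (indeed analytic), because $\psi$ is the log of a finite sum of exponentials with $a_j\ge0$ and $\sum_j a_j=1$.

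Next, Taylor-expand $F$ at $\lambda=0$. We have $F(0)=-\psi(0)=0$ and $F'(\lambda)=\lambda\psi''(\lambda)$, so $F'(0)=0$ and $F''(0)=\psi''(0)$. The cumulant identity gives $\psi''(0)=\mathrm{Var}_a[s]$, since $\psi''(\lambda)=\mathrm{Var}_{\tilde a}[s]$ and $\tilde a=a$ at $\lambda=0$. Taylor's theorem with Peano remainder then yields $D_{\mathrm{KL}}(\tilde a\,\|\,a)=\tfrac12\lambda^2\mathrm{Var}_a[s]+o(\lambda^2)$.

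If a non-asymptotic version is wanted, I would use the integral form $F(\lambda)=\int_0^\lambda u\,\psi''(u)\,du$. Together with $\psi''(u)=\mathrm{Var}_{\tilde a(u)}[s]\le \tfrac14(\max s-\min s)^2$ (Popoviciu), this gives $D_{\mathrm{KL}}\le \tfrac18\lambda^2(\max_i s_i-\min_i s_i)^2$ for all $\lambda$. This quantifies the claim that the perturbation is negligible when $s$ is weak: since $s=\mathbf w_t\in(0,1)$, the range is at most $1$.

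Then apply Pinsker's inequality $\|\tilde a-a\|_{\mathrm{TV}}\le\sqrt{\tfrac12 D_{\mathrm{KL}}(\tilde a\,\|\,a)}$, which holds for any pair of distributions on a finite set. Combining this with the expansion gives $\|\tilde a-a\|_{\mathrm{TV}}\le \tfrac12\lambda\sqrt{\mathrm{Var}_a[s]}+o(\lambda)$, or globally $\|\tilde a-a\|_{\mathrm{TV}}\le \tfrac14\lambda(\max s-\min s)$. When the gate $\mathbb{I}[H_t^{(l-1)}>\gamma]\cdot\mathbb{I}[l\ge l_0]$ is off, $\tilde a=a$ by definition and both quantities vanish. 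When $s$ is nearly constant on the support of $a$, the variance term is small, so the perturbation is again small.

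I expect the main obstacle to be justifying the uniformity of the $o(\lambda^2)$ term and handling degenerate cases rather than the algebra itself. These include zero-mass entries of $a$ (harmless, since they stay zero under the tilt, and $D_{\mathrm{KL}}(\tilde a\|a)$ is finite because $\tilde a\ll a$) and the case $\mathrm{Var}_a[s]=0$, where the tilt is exactly trivial. A second caveat, which I would state explicitly, is that the actual update in Eq.~\eqref{eq:intervention} acts on pre-softmax scores. For $\mathbf A\ge0$ this corresponds to the tilt $\exp(\log(1+w))$, which matches the Setup with $\lambda s_i=\log(1+w_i)$. Equivalently, the bound applies with $s_i=\log(1+w_i)/\lambda$, so I would note this identification and not belabor it.
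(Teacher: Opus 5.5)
Your proof is correct and follows the paper's argument: the closed form $D_{\mathrm{KL}}(\tilde a\,\|\,a)=\lambda\psi'(\lambda)-\psi(\lambda)$ (the paper's $\lambda A'(\lambda)-A(\lambda)$), a second-order Taylor expansion at $\lambda=0$, and Pinsker. You organize the expansion via $F'(\lambda)=\lambda\psi''(\lambda)$ rather than expanding the log-normalizer and its derivative separately, which is equivalent, and your integral/Popoviciu bound $D_{\mathrm{KL}}\le\frac{1}{8}\lambda^2(\max_i s_i-\min_i s_i)^2$ is a valid non-asymptotic strengthening the paper does not state. One slip in your closing aside: if Eq.~\ref{eq:intervention} acts on pre-softmax scores, a score $\mathbf A_i$ becomes $\mathbf A_i+|\mathbf A_i|w_i$, which gives $\tilde a_i\propto a_i e^{|\mathbf A_i| w_i}$ for either sign of $\mathbf A_i$, so the correct identification is $\lambda s_i=|\mathbf A_i|w_i$, not $\log(1+w_i)$ (that would correspond to rescaling post-softmax weights); consequently your range bound based on $w\in(0,1)$ does not transfer to the actual update without also controlling $|\mathbf A_i|$.
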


\noindent\textit{Proofs of Theorems~\ref{thm:kl-minimal}--\ref{thm:bounded-risk} are provided in Appendix.}

%% file: sections/6_experiments.tex
\section{Experiments}
\label{sec:experiments}

\begin{table}[!b]
\caption{\textbf{CHAIR evaluation results} for LLaVA-1.5 and InstructBLIP.}
\centering
\resizebox{0.8\textwidth}{!}{%
\begin{tabular}{llcccc}
\toprule
\multirow{2}{*}{} & \multirow{2}{*}{\textbf{Method}} & \multicolumn{2}{c}{Max Token = $64$} & \multicolumn{2}{c}{Max Token = $128$} \\
\cmidrule(lr){3-4} \cmidrule(lr){5-6}
 &  & \textit{CHAIR}$_S \downarrow$ & \textit{CHAIR}$_I \downarrow$ & \textit{CHAIR}$_S \downarrow$ & \textit{CHAIR}$_I \downarrow$ \\ \midrule
\multirow{5}{*}{\textbf{LLaVA-1.5}} & Regular & 26.0 & 8.8 & 56.6 & 16.8 \\
 & VCD & 23.8 & 8.2 & 59.6 & 16.6  \\
 & PAI & 26.0 & 8.7 & 54.0 & 15.2 \\
 & ONLY & 21.0 & 7.3 & 48.4 & 14.7 \\
 & \cellcolor{lightgray!30}\textbf{\ours} & \cellcolor{lightgray!30}\textbf{17.8} & \cellcolor{lightgray!30}\textbf{6.9} & \cellcolor{lightgray!30}\textbf{39.6} & \cellcolor{lightgray!30}\textbf{12.4} \\ \midrule
\multirow{5}{*}{\textbf{InstructBLIP}} & Regular & 29.0 & 10.2 & 54.2 & 16.7 \\
 & VCD & 26.4 & 8.7 & 55.8 & \textbf{15.9} \\
 & PAI & 24.6 & 8.3 & 56.8 & 16.7 \\
 & ONLY & 25.2 & 8.8 & 55.8 & 17.4 \\
 & \cellcolor{lightgray!30}\textbf{\ours} & \cellcolor{lightgray!30}\textbf{24.2} & \cellcolor{lightgray!30}\textbf{8.2} & \cellcolor{lightgray!30}\textbf{52.8} & \cellcolor{lightgray!30}16.5 \\ \bottomrule
\end{tabular}
}
\label{tab:chair}
\end{table}

\subsection{Experiment Setup}
\noindent\textbf{Datasets.}
Our evaluation employs three benchmarks.
\textbf{POPE}~\cite{li2023evaluating} detects hallucinations via binary object-existence queries on MS-COCO~\cite{lin2014microsoft}, A-OKVQA~\cite{schwenk2022okvqa}, and GQA~\cite{hudson2019gqa}, using \textit{random}, \textit{popular}, and \textit{adversarial} sampling to assess accuracy, memorization bias, and robustness.
\textbf{CHAIR}~\cite{rohrbach2018object} evaluates hallucinations in free-form captioning by quantifying both the proportion of hallucinated object instances and the proportion of captions containing hallucinations.
\textbf{MME}~\cite{yin2024survey} delivers a comprehensive evaluation across fourteen subtasks formulated as yes-or-no queries, encompassing perceptual dimensions such as object existence, count, position, and color.

\noindent\textbf{Implementation details.}
Our evaluation is conducted on LLaVA-1.5~\cite{liu2024improved}, InstructBLIP~\cite{dai2023instructblip}, and Qwen-VL~\cite{bai2023qwen}. Baselines, including VCD~\cite{leng2024mitigating}, PAI~\cite{liu2024paying} and ONLY~\cite{wan2025only}, are employed with their default configurations to ensure a fair comparison. For our approach, we perform a grid search over the hyperparameters $l_s$, $\gamma$ and $\lambda$, while $\sigma$ is instantiated as the $\mathrm{sigmoid}$ function. All experiments are executed on a single 80GB NVIDIA A800 GPU.

\begin{table}[!t]
\caption{\textbf{POPE evaluation results} for LLaVA-1.5, InstructBLIP, and Qwen-VL.}
\centering
\resizebox{0.75\textwidth}{!}{%
\begin{tabular}{p{7mm}p{19mm}p{14mm}p{12mm}p{12mm}p{12mm}p{12mm}p{12mm}p{12mm}}
\toprule
\multicolumn{2}{c}{\multirow{2}{*}{\textbf{Dataset}}} & \multirow{2}{*}{\textbf{Method}} & \multicolumn{2}{c}{\textbf{LLaVA-1.5}} & \multicolumn{2}{c}{\textbf{InstructBLIP}} & \multicolumn{2}{c}{\textbf{Qwen-VL}} \\
\cmidrule(lr){4-5} \cmidrule(lr){6-7} \cmidrule(lr){8-9} 
\multicolumn{2}{l}{} &  & \textit{Acc} $\uparrow$ & \textit{F1} $\uparrow$ & \textit{Acc} $\uparrow$ & \textit{F1} $\uparrow$ & \textit{Acc} $\uparrow$ & \textit{F1} $\uparrow$ \\ \midrule
\multirow{15}{*}{\rotatebox{90}{\textbf{MS-COCO}}} & \multirow{5}{*}{Random} & Regular & 85.46 & 86.07 & 82.82 & 83.56 & 85.09 & 83.46 \\
 &  & VCD & 85.74 & 86.70 & 85.53 & 85.32 & 89.59 & 89.18 \\
 &  & PAI & 86.67 & 87.26 & 85.43 & 83.60 & 85.74 & 84.20 \\
 &  & ONLY & 88.38 & 88.41 & 87.32 & 87.77 & 88.97 & 88.37 \\
 &  & \cellcolor{lightgray!30}\textbf{\ours} & \cellcolor{lightgray!30}\textbf{89.24} & \cellcolor{lightgray!30}\textbf{89.14} & \cellcolor{lightgray!30}\textbf{89.55} & \cellcolor{lightgray!30}\textbf{89.35} & \cellcolor{lightgray!30}\textbf{89.90} & \cellcolor{lightgray!30}\textbf{89.49} \\ \cmidrule{2-9}
 & \multirow{5}{*}{Popular} & Regular & 81.20 & 82.22 & 75.77 & 77.69 & 84.13 & 81.97 \\
 &  & VCD & 81.93 & 83.36 & 80.97 & 81.15 & 87.57 & 87.02 \\
 &  & PAI & 82.77 & 83.60 & 77.13 & 79.14 & 85.10 & 83.09 \\
 &  & ONLY & 85.00 & 85.23 & 77.87 & 79.93 & 87.23 & 86.31 \\
 &  & \cellcolor{lightgray!30}\textbf{\ours} & \cellcolor{lightgray!30}\textbf{86.03} & \cellcolor{lightgray!30}\textbf{85.53} & \cellcolor{lightgray!30}\textbf{84.30} & \cellcolor{lightgray!30}\textbf{83.65} & \cellcolor{lightgray!30}\textbf{88.30} & \cellcolor{lightgray!30}\textbf{87.70} \\ \cmidrule{2-9}
 & \multirow{5}{*}{Adversarial} & Regular & 75.87 & 78.27 & 74.23 & 76.61 & 81.50 & 79.51 \\
 &  & VCD & 76.90 & 79.62 & 79.17 & 79.99 & 84.20 & 84.07 \\
 &  & PAI & 76.83 & 79.14 & 74.87 & 77.53 & 83.33 & 81.64 \\
 &  & ONLY & 79.93 & 81.18 & 75.57 & 78.29 & 84.23 & 83.67 \\
 &  & \cellcolor{lightgray!30}\textbf{\ours} & \cellcolor{lightgray!30}\textbf{82.77} & \cellcolor{lightgray!30}\textbf{82.73} & \cellcolor{lightgray!30}\textbf{81.83} & \cellcolor{lightgray!30}\textbf{81.56} & \cellcolor{lightgray!30}\textbf{84.97} & \cellcolor{lightgray!30}\textbf{84.69} \\ \midrule
\multirow{15}{*}{\rotatebox{90}{\textbf{A-OKVQA}}} & \multirow{5}{*}{Random} & Regular & 82.07 & 83.31 & 81.53 & 82.86 & 86.53 & 85.19 \\
 &  & VCD & 82.17 & 84.14 & 85.27 & 85.72 & 89.77 & 89.50 \\
 &  & PAI & 83.33 & 84.85 & 83.10 & 84.46 & 87.03 & 85.80 \\
 &  & ONLY & 86.03 & 86.93 & 85.67 & 86.70 & 89.23 & 88.78 \\
 &  & \cellcolor{lightgray!30}\textbf{\ours} & \cellcolor{lightgray!30}\textbf{89.00} & \cellcolor{lightgray!30}\textbf{89.03} & \cellcolor{lightgray!30}\textbf{89.87} & \cellcolor{lightgray!30}\textbf{89.73} & \cellcolor{lightgray!30}\textbf{90.07} & \cellcolor{lightgray!30}\textbf{89.91} \\ \cmidrule{2-9}
 & \multirow{5}{*}{Popular} & Regular & 75.30 & 78.99 & 74.80 & 77.98 & 86.00 & 84.78 \\
 &  & VCD & 77.20 & 80.54 & 78.97 & 80.76 & 89.53 & 89.34 \\
 &  & PAI & 76.37 & 79.79 & 76.47 & 79.61 & 87.37 & 86.17 \\
 &  & ONLY & 79.40 & 81.86 & 76.73 & 80.07 & 88.83 & 88.46 \\
 &  & \cellcolor{lightgray!30}\textbf{\ours} & \cellcolor{lightgray!30}\textbf{84.60} & \cellcolor{lightgray!30}\textbf{85.23} & \cellcolor{lightgray!30}\textbf{84.87} & \cellcolor{lightgray!30}\textbf{85.40} & \cellcolor{lightgray!30}\textbf{89.53} & \cellcolor{lightgray!30}\textbf{89.43} \\ \cmidrule{2-9}
 & \multirow{5}{*}{Adversarial} & Regular & 67.07 & 73.70 & 68.33 & 73.89 & 81.03 & 80.36 \\
 &  & VCD & 68.30 & 74.86 & 73.33 & 77.00 & 82.13 & 82.92 \\
 &  & PAI & 67.60 & 74.23 & 68.67 & 74.55 & 82.10 & 81.58 \\
 &  & ONLY & 70.47 & 75.86 & 68.77 & 74.95 & 82.20 & 82.74 \\
 &  & \cellcolor{lightgray!30}\textbf{\ours} & \cellcolor{lightgray!30}\textbf{77.40} & \cellcolor{lightgray!30}\textbf{79.79} & \cellcolor{lightgray!30}\textbf{75.23} & \cellcolor{lightgray!30}\textbf{78.04} & \cellcolor{lightgray!30}\textbf{82.60} & \cellcolor{lightgray!30}\textbf{83.20} \\ \midrule
\multirow{15}{*}{\rotatebox{90}{\textbf{GQA}}} & \multirow{5}{*}{Random} & Regular & 82.03 & 83.86 & 80.17 & 81.55 & 83.83 & 82.60 \\
 &  & VCD & 81.70 & 83.99 & 83.37 & 83.78 & 88.33 & 88.26 \\
 &  & PAI & 83.37 & 85.03 & 81.23 & 82.67 & 85.90 & 84.77 \\
 &  & ONLY & 86.20 & 87.20 & 83.87 & 85.02 & 88.13 & 87.54 \\
 &  & \cellcolor{lightgray!30}\textbf{\ours} & \cellcolor{lightgray!30}\textbf{89.10} & \cellcolor{lightgray!30}\textbf{89.15} & \cellcolor{lightgray!30}\textbf{88.10} & \cellcolor{lightgray!30}\textbf{87.84} & \cellcolor{lightgray!30}\textbf{90.13} & \cellcolor{lightgray!30}\textbf{89.87} \\ \cmidrule{2-9}
 & \multirow{5}{*}{Popular} & Regular & 71.93 & 76.88 & 72.27 & 75.97 & 80.77 & 80.15 \\
 &  & VCD & 74.37 & 78.97 & 77.57 & 79.40 & 84.33 & 84.82 \\
 &  & PAI & 72.73 & 77.60 & 73.77 & 77.34 & 82.67 & 81.89 \\
 &  & ONLY & 75.93 & 79.62 & 74.80 & 78.41 & 82.80 & 82.72 \\
 &  & \cellcolor{lightgray!30}\textbf{\ours} & \cellcolor{lightgray!30}\textbf{83.43} & \cellcolor{lightgray!30}\textbf{84.39} & \cellcolor{lightgray!30}\textbf{81.20} & \cellcolor{lightgray!30}\textbf{82.05} & \cellcolor{lightgray!30}\textbf{84.97} & \cellcolor{lightgray!30}\textbf{84.95} \\ \cmidrule{2-9}
 & \multirow{5}{*}{Adversarial} & Regular & 67.93 & 74.35 & 68.33 & 73.25 & 79.20 & 78.96 \\
 &  & VCD & 68.63 & 75.41 & 73.40 & 76.38 & 81.70 & 82.74 \\
 &  & PAI & 69.00 & 75.34 & 69.10 & 74.19 & 80.87 & 80.41 \\
 &  & ONLY & 71.50 & 76.75 & 69.37 & 74.92 & 81.43 & 81.67 \\
 &  & \cellcolor{lightgray!30}\textbf{\ours} & \cellcolor{lightgray!30}\textbf{78.33} & \cellcolor{lightgray!30}\textbf{80.46} & \cellcolor{lightgray!30}\textbf{76.00} & \cellcolor{lightgray!30}\textbf{77.65} & \cellcolor{lightgray!30}\textbf{83.27} & \cellcolor{lightgray!30}\textbf{83.51} \\ 
 \bottomrule
\end{tabular}
}
\label{tab:pope}
\end{table}

\begin{table}[!b]
\caption{\textbf{MME evaluation results} for LLaVA-1.5, InstructBLIP, and Qwen-VL.}
\centering
\resizebox{0.8\textwidth}{!}{%
\begin{tabular}{llccccc}
\toprule
 & \multirow{2}{*}{\textbf{Method}} & \multicolumn{2}{c}{Object-level} & \multicolumn{2}{c}{Attribute-level} & \multirow{2}{*}{\textit{ Score} $\uparrow$}\\
 \cmidrule(lr){3-4} \cmidrule(lr){5-6}
 &  & \textit{Existence} $\uparrow$ & \textit{Count} $\uparrow$ & \textit{Position} $\uparrow$ & \textit{Color} $\uparrow$ &  \\
 \midrule
\multirow{5}{*}{\textbf{LLaVA-1.5}} & Regular & 185.00 & 126.67 & 128.33 & 148.33 & 588.33 \\
 & VCD & 180.00 & 141.67& 128.33 & 153.33 & 603.33 \\
 & PAI & 185.00 & 131.67 & 133.33 & 153.33 & 603.33 \\
 & ONLY & 190.00 & 141.67 & 133.33 & 163.33 & 628.33 \\
 & \cellcolor{lightgray!30}\textbf{\ours} & \cellcolor{lightgray!30}\textbf{190.00} & \cellcolor{lightgray!30}\textbf{143.33}& \cellcolor{lightgray!30}\textbf{148.33}& \cellcolor{lightgray!30}\textbf{178.33}& \cellcolor{lightgray!30}\textbf{660.00}\\ \midrule
\multirow{5}{*}{\textbf{InstructBLIP}} & Regular & 170.00 & 75.00 & 68.33 & 140.00 & 453.33 \\
 & VCD & 155.00 & 78.33 & 76.67 & 155.00& 465.00 \\
 & PAI & 150.00 & 88.33 & \textbf{78.33} & 150.00 & 466.67\\
 & ONLY & 170.00 & 68.33 & 63.33 & 145.00 & 446.66 \\
 & \cellcolor{lightgray!30}\textbf{\ours} & \cellcolor{lightgray!30}\textbf{175.00} & \cellcolor{lightgray!30}\textbf{100.00}& \cellcolor{lightgray!30}70.00 & \cellcolor{lightgray!30}\textbf{165.00} & \cellcolor{lightgray!30}\textbf{510.00}\\ \midrule
\multirow{5}{*}{\textbf{Qwen-VL}} & Regular & 165.00 & 135.00 & 163.33& 175.00 & 638.33 \\
 & VCD & 170.00 & 120.00 & 133.33 & 175.00 & 598.33 \\
 & PAI & 175.00 & 135.00 & 163.33 & 175.00 & 648.33 \\
 & ONLY & 180.00 & 136.67 & 153.33 & 180.00 & 650.00 \\
 & \cellcolor{lightgray!30}\textbf{\ours} & \cellcolor{lightgray!30}\textbf{180.00} & \cellcolor{lightgray!30}\textbf{146.67} & \cellcolor{lightgray!30}\textbf{163.33} & \cellcolor{lightgray!30}\textbf{185.00} & \cellcolor{lightgray!30}\textbf{675.00} \\ 
\bottomrule
\end{tabular}
}
\label{tab:mme}
\end{table}

\begin{table}[!b]
\caption{\textbf{Efficiency–performance trade-off on LLaVA-1.5.}
CAI$^\dagger$ and PAI$^\dagger$ uses attention intervention only, CAI and PAI additionally applies contrastive decoding.}
\centering
\resizebox{\textwidth}{!}{%
\begin{tabular}{lllllllclclcl}
\toprule
\textbf{Method} & \multicolumn{2}{c}{\begin{tabular}[c]{@{}c@{}}Latency $\downarrow$\\ (ms/token)\end{tabular}} & \multicolumn{2}{c}{\begin{tabular}[c]{@{}c@{}}Throughput $\uparrow$\\ (token/ms)\end{tabular}} & \multicolumn{2}{c}{\begin{tabular}[c]{@{}c@{}}GPU Memory $\downarrow$\\ (MB)\end{tabular}} & \multicolumn{2}{c}{\textit{POPE $\uparrow$}} & \multicolumn{2}{c}{\textit{CHAIR $\downarrow$}} & \multicolumn{2}{c}{\textit{MME $\uparrow$}} \\ \midrule
Regular & \multicolumn{2}{c}{89.62} & \multicolumn{2}{c}{9.41} & \multicolumn{2}{c}{14241} & \multicolumn{2}{c}{74.81} & \multicolumn{2}{c}{56.6} & \multicolumn{2}{c}{588.33} \\
VCD & 215.22 & $_{ (\times 2.40)}$ & 2.22 & $_{ (\times 0.24)}$ & 15299 & $_{ (\times 1.07)}$ & 75.89 & {\textcolor{eccvblue}{$_{(+1.08)}$}} & 59.6 & {\color[HTML]{FF0000} $_{(+3.0)}$} & 603.33 & {\textcolor{eccvblue}{$_{(+15.00)}$}} \\
ONLY & 116.51 & $_{ (\times 1.30)}$ & 6.31 & $_{ (\times 0.67)}$ & 14281 & $_{ (\times 1.00)}$ & 78.63 & {\textcolor{eccvblue}{$_{(+3.82)}$}} & 48.4 & {\textcolor{eccvblue}{$_{(-8.2)}$}} & 628.33 & {\textcolor{eccvblue}{$_{(+40.00)}$}} \\
\midrule
PAI$^{\dagger}$ & 90.97 & $_{ (\times 1.02)}$ & 9.27 & $_{ (\times 0.99)}$ & 14241 & $_{ (\times 1.00)}$ & 74.57 & {\color[HTML]{FF0000} $_{(-0.24)}$} & 54.8 & {\textcolor{eccvblue}{$_{(-1.8)}$}} & 581.66 & {\color[HTML]{FF0000} $_{(-6.67)}$} \\
\rowcolor{gray!20} \textbf{CAI}$^{\dagger}$ & \textbf{92.35} & $_{ (\times 1.03)}$ & \textbf{9.15} & $_{ (\times 0.97)}$ & \textbf{14251} & $_{ (\times 1.00)}$ & 77.13 & {\textcolor{eccvblue}{$_{(+2.32)}$}} & 43.6 & {\textcolor{eccvblue}{$_{(-13.0)}$}} & 608.33 & {\textcolor{eccvblue}{$_{(+20.00)}$}} \\
\midrule
PAI & 123.57 & $_{ (\times 1.38)}$ & 7.09 & $_{ (\times 0.75)}$ & 14281 & $_{ (\times 1.00)}$ & 75.77 & {\textcolor{eccvblue}{$_{(+0.96)}$}} & 54.0 & {\textcolor{eccvblue}{$_{(-2.6)}$}} & 603.33 & {\textcolor{eccvblue}{$_{(+15.00)}$}} \\
\rowcolor{gray!20} \textbf{\ours} & 131.33 & $_{ (\times 1.47)}$ & 6.8 & $_{ (\times 0.72)}$ & 14291 & $_{ (\times 1.00)}$ & \textbf{83.67} & {\textcolor{eccvblue}{$_{(+8.86)}$}} & \textbf{39.6} & {\textcolor{eccvblue}{$_{(-17.0)}$}} & \textbf{660.00} & {\textcolor{eccvblue}{$_{(+71.67)}$}} \\ \bottomrule
\end{tabular}
}%
\label{tab:efficiency}
\end{table}

\subsection{Results}

\noindent\textbf{Overall performance across benchmarks.}
Across POPE, CHAIR, and MME, \ours consistently improves hallucination mitigation and visual grounding over the strongest training-free baseline, and the gains hold across three representative LVLM backbones (LLaVA-1.5, InstructBLIP, and Qwen-VL), indicating that CAI is not model-specific but transfers reliably.

\noindent\textbf{Hallucination mitigation on POPE and CHAIR.}
Table~\ref{tab:pope} shows consistent improvements on POPE across all backbones, reflecting fewer visually ungrounded responses under a controlled hallucination protocol.
Table~\ref{tab:chair} further corroborates this trend on free-form captions: \ours reduces both sentence-level and instance-level hallucination rates, and the reduction remains under different maximum output lengths, suggesting that CAI mitigates error accumulation during autoregressive decoding.

\noindent\textbf{General visual reasoning on MME.}
Beyond hallucination-specific metrics, Table~\ref{tab:mme} demonstrates improved performance on MME for both object-level (existence, count) and attribute-level (position, color) reasoning.
This aligns with CAI's design---token-specific region localization strengthens fine-grained grounding, while entropy- and depth-gated intervention avoids uniform amplification of irrelevant regions, leading to more reliable object and attribute predictions.

\subsection{Discussion}
\begin{figure}[!b]
    \centering
    \begin{minipage}[t
    ]{0.38\textwidth}
        \centering
        \includegraphics[width=\textwidth]{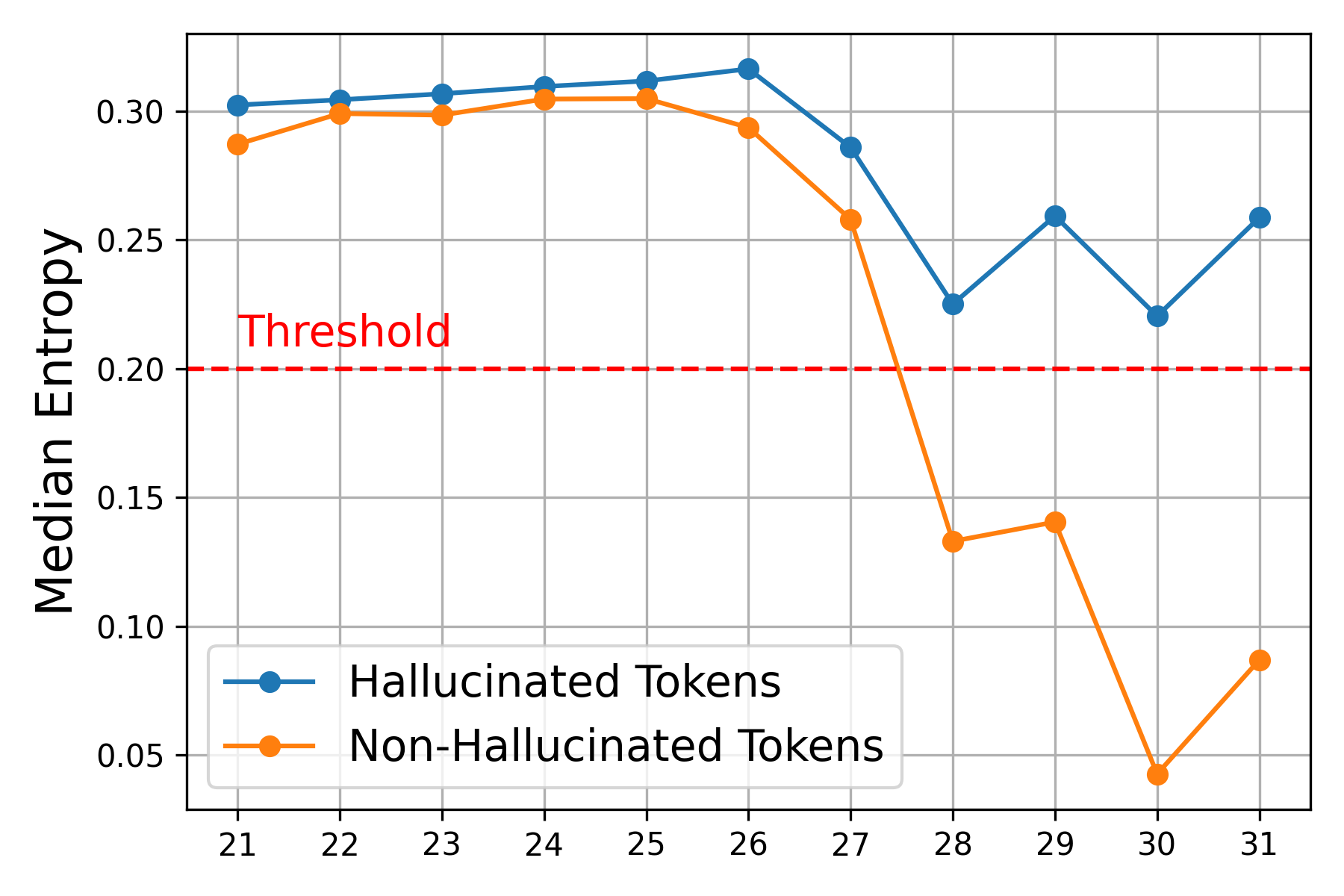}
        \caption{\textbf{Median entropy of tokens across layers.} With an intervention threshold, hallucination and non-hallucination tokens are distinctly separable in the deeper layers.}
        \label{fig:app_entropy}
    \end{minipage}
    \hfill
    \begin{minipage}[c]{0.6\textwidth}
        \centering
        \begin{subfigure}[c]{0.38\textwidth}
            \centering
            \includegraphics[width=\textwidth]{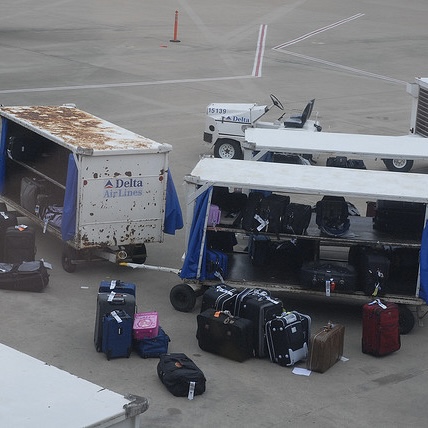}
            \caption{Image sample}
        \end{subfigure}
        \begin{subfigure}[c]{0.6\textwidth}
            \centering
            \includegraphics[width=\textwidth]{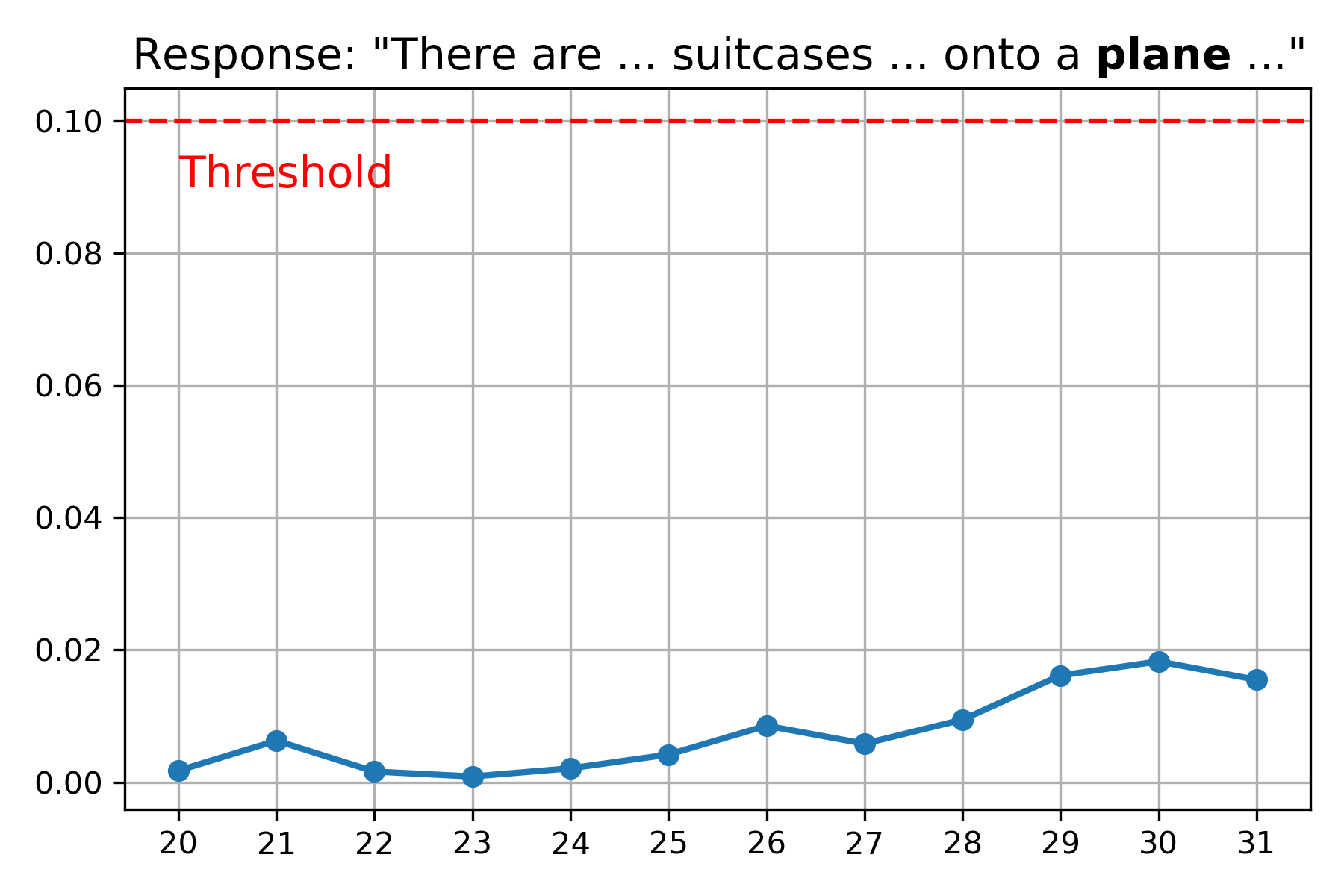}
            \caption{Layer entropy of “plane”}
        \end{subfigure}
    \caption{\textbf{Case study: failure mode of confident hallucination.} Given the prompt “Please describe this image in detail”, the model generates the hallucinated token “plane”, which is not present in (a). Note that its predictive entropy across multiple layers remains below threshold, illustrating a failure mode where confident hallucinations evade entropy-based intervention.}
    \label{fig:reb_fail}
    \end{minipage}
\end{figure}

\noindent\textbf{Analysis of contrastive decoding.}
Table~\ref{tab:efficiency} summarizes the accuracy–efficiency trade-offs. The attention-only variant (CAI$^{\dagger}$) consistently outperforms the {Regular} baseline on POPE, CHAIR, and MME while maintaining comparable latency and throughput. This demonstrates that the principal gains stem from the proposed token- and risk-aware attention intervention, rather than from decoding tricks.
In contrast, PAI$^{\dagger}$, which applies uniform amplification, yields limited or even negative improvements, supporting our hypothesis that indiscriminate boosting can introduce irrelevant evidence and impair grounding.

Adding contrastive decoding (CAI$^{\dagger}{\rightarrow}$CAI) further improves performance across benchmarks with only marginal overhead. For fairness, CAI adopts the same contrastive decoding scheme as PAI; however, the results indicate that CAI does not depend on it for its core improvements. Instead, contrastive decoding serves as a complementary module that provides incremental precision, particularly under elevated hallucination risk. Overall, the key distinction lies in where, when, and how attention is modulated—through token-specific relevance estimation, entropy- and depth-based gating, and KL-minimal reweighting—while decoding operates as a shared auxiliary component rather than the primary source of gain.

\noindent\textbf{Effect of deep-layer entropy in detecting hallucination.}
We adopt predictive entropy as a training-free, model-internal signal available at inference time and empirically validate its utility as a hallucination risk indicator. Predictive entropy is used only as a conservative trigger to avoid over-correction: high-entropy tokens concentrate hallucination risk in deeper layers (Fig.~\ref{fig:app_entropy}), so gating preserves fluency by leaving confident tokens untouched.
Fig.~\ref{fig:reb_fail} highlights a known failure mode: \textit{confident hallucination}. In this case, entropy stays below the threshold, and CAI is not triggered. This limitation is inherent to uncertainty-based gating, and it motivates using contrastive decoding as an optional safeguard against text-only hypotheses.

\noindent\textbf{Effect of $l_s$ and $\gamma$ in intervention conditions.}
We sweep the start layer $l_s\!\in\![25,30]$ and entropy threshold $\gamma\!\in\!\{0.05,0.1,0.15\}$ on A-OKVQA (random setting).
Fig.~\ref{fig:ablation_start_layer} peaks at $l_s=27$ and $\gamma=0.1$.
Intervening too \emph{early} amplifies shallow-layer noise, while intervening too \emph{late} misses error accumulation; similarly, a threshold that is too \emph{low} over-triggers on easy tokens, and too \emph{high} under-triggers on genuinely uncertain tokens.
These trends support our design: decide \emph{where} to look by token-image similarity, and \emph{when} to act by depth and predictive entropy.

\noindent\textbf{Effect of $\lambda$ in contrastive decoding.}
We grid-search the decoding coefficient $\lambda$ (Fig.~\ref{fig:ablation_lambda}); accuracy is maximized at $\lambda=3.0$ on A-OKVQA (random setting).
Small $\lambda$ under-penalizes language-prior continuations; excessively large $\lambda$ over-restricts decoding and can hurt fluency.
We use $\lambda\approx3$ when precision is prioritized, and $\lambda=1$ (i.e., no contrastive decoding, \textit{CAI$^\dagger$}) in strict latency budgets.

\begin{figure}[t]
    \centering
    \begin{minipage}[t]{0.48\textwidth}
        \centering
        \includegraphics[width=\textwidth]{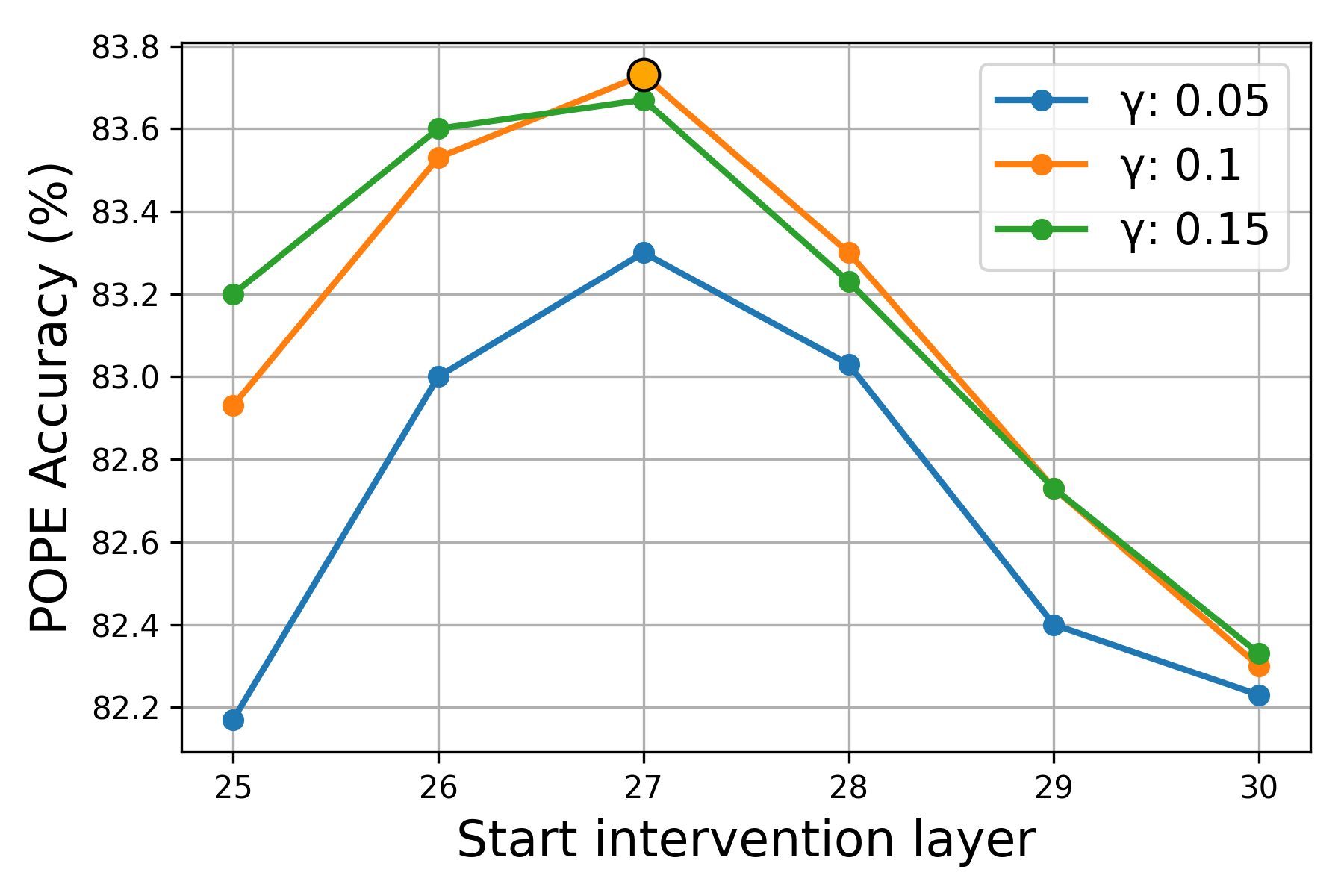}
        \caption{Ablation study on the starting intervention layer $l_s$ and the entropy threshold $\gamma$ under the random setting of A-OKVQA in POPE.}
        \label{fig:ablation_start_layer}
    \end{minipage}
    \hfill
    \begin{minipage}[t]{0.48\textwidth}
        \centering
        \includegraphics[width=\textwidth]{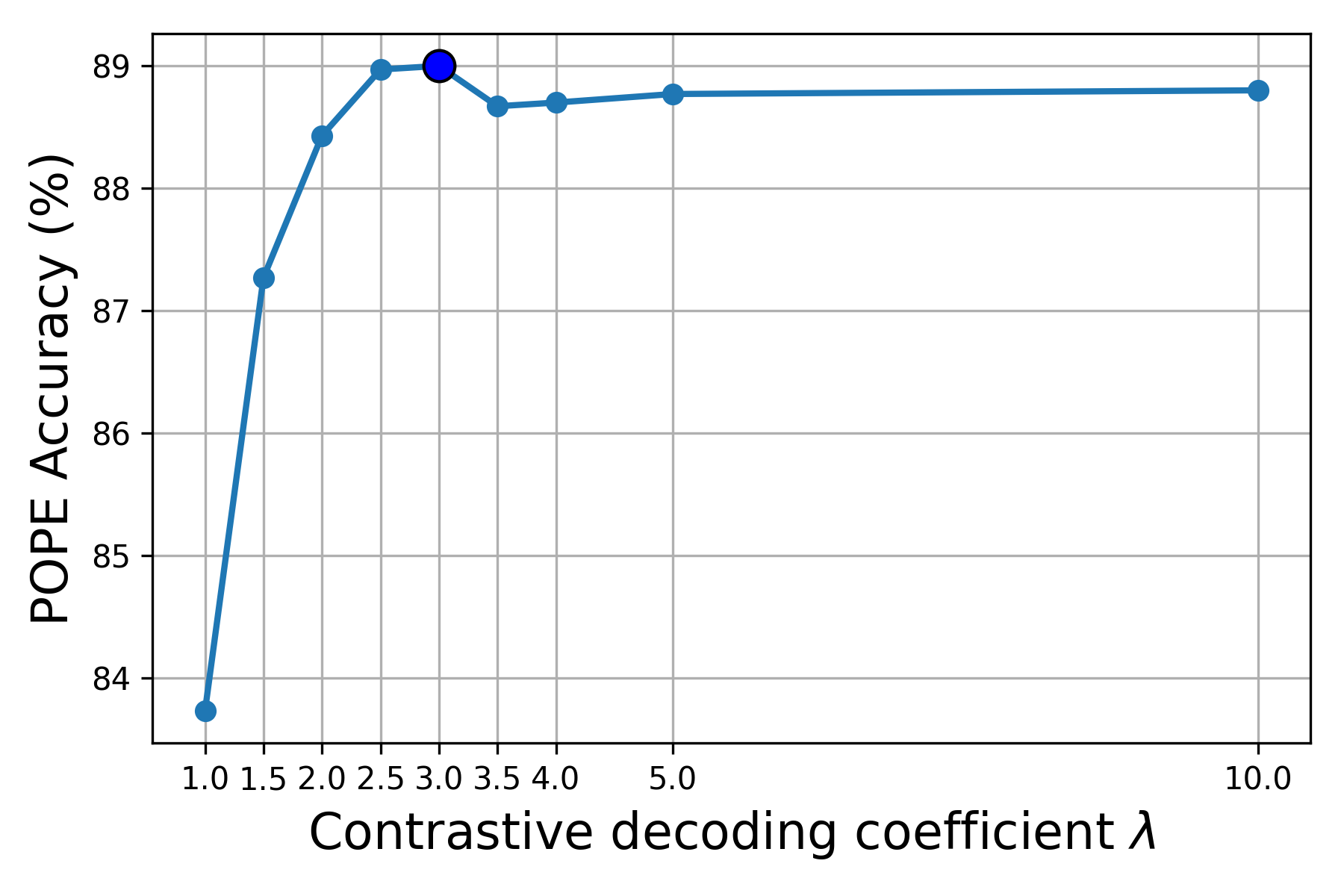}
        \caption{Ablation study on the contrastive decoding coefficient $\lambda$ under the random setting of A-OKVQA in POPE.}
        \label{fig:ablation_lambda}
    \end{minipage}
\end{figure}

\begin{figure*}[t]
    \centering
    \begin{subfigure}[b]{0.19\textwidth}
        \centering
        \includegraphics[width=\textwidth]{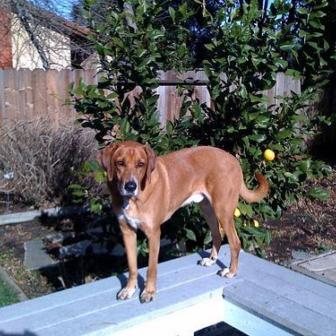}
        \caption{Image sample}
    \end{subfigure}
    \hfill
    \begin{subfigure}[b]{0.19\textwidth}
        \centering
        \includegraphics[width=\textwidth]{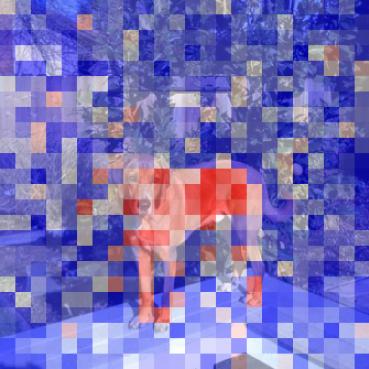}
        \caption{$0$-th layer}
    \end{subfigure}
    \hfill
    \begin{subfigure}[b]{0.19\textwidth}
        \centering
        \includegraphics[width=\textwidth]{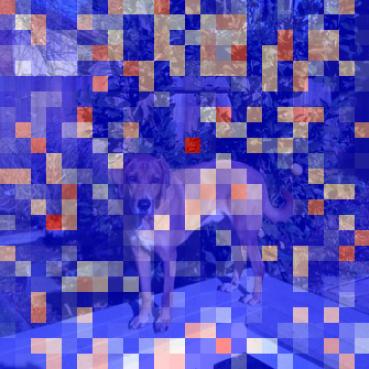}
        \caption{$11$-th layer}
    \end{subfigure}
    \hfill
    \begin{subfigure}[b]{0.19\textwidth}
        \centering
        \includegraphics[width=\textwidth]{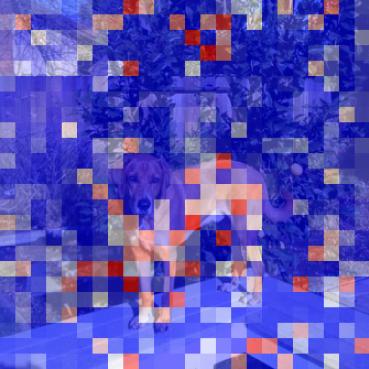}
        \caption{$21$-th layer}
    \end{subfigure}
    \hfill
    \begin{subfigure}[b]{0.19\textwidth}
        \centering
        \includegraphics[width=\textwidth]{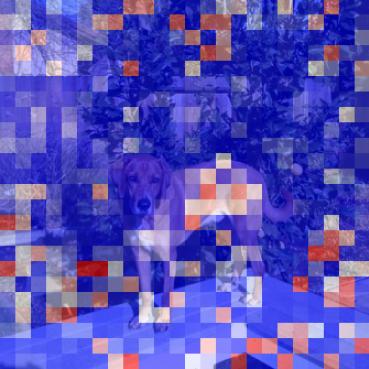}
        \caption{$31$-th layer}
    \end{subfigure}
    \caption{\textbf{Visualization of token-image similarity across decoding layers.} Given visual input (a) and the query \textit{“Is there a black dog in the image?”}. It can be observed that the hidden states in the $0$-th layer strongly correspond to \textit{`dog'}, whereas the deeper layers exhibit a diminished focus on local object features, reflecting a shift toward more global representations.}
    \label{fig:app_sim}
\end{figure*}

\begin{figure}[t]
    \centering
    \resizebox{1.\textwidth}{!}{
    \begin{tabular}{c}
        \begin{subfigure}[c]{0.3\textwidth}
            \centering
            \includegraphics[width=\textwidth]{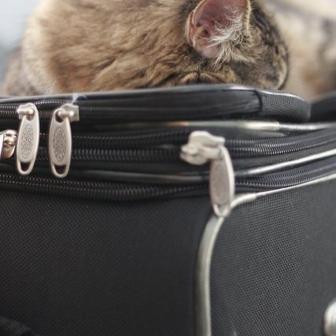}
            \caption{Image sample}
        \end{subfigure}
        \hfill
        \begin{subfigure}[c]{0.65\textwidth}
            \centering
            \includegraphics[width=\textwidth]{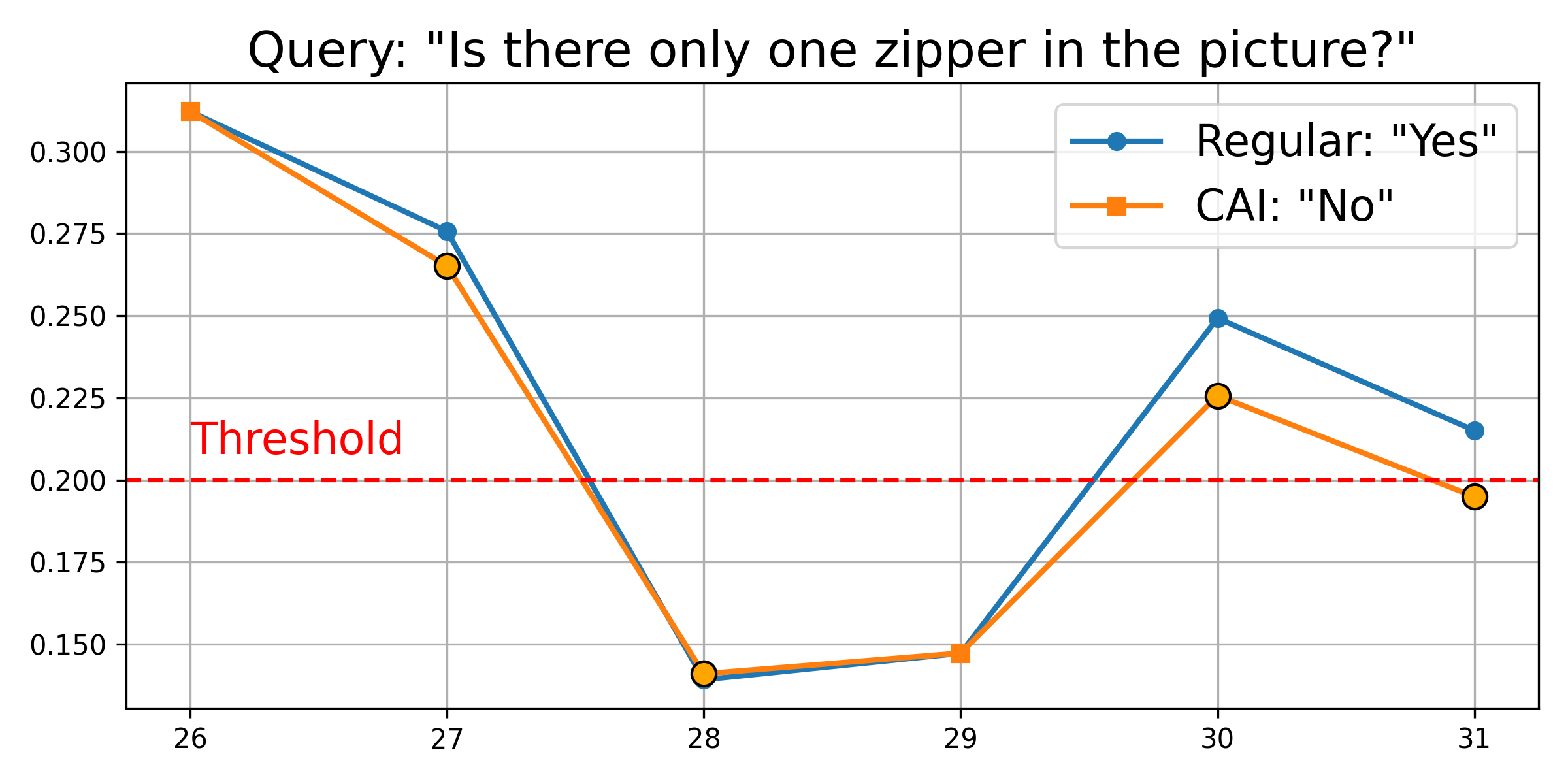}
            \caption{Entropy across layers in token generation}
        \end{subfigure}
        \\[0.7em]

        \begin{subfigure}[t]{0.3\textwidth}
            \centering
            \includegraphics[width=\textwidth]{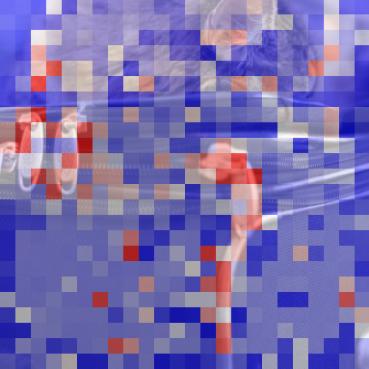}
            \caption{Token-image similarity}
        \end{subfigure}
        \hfill
        \begin{subfigure}[t]{0.3\textwidth}
            \centering
            \includegraphics[width=\textwidth]{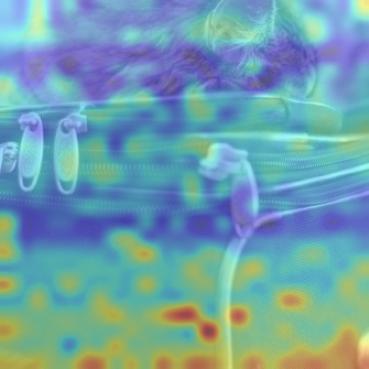}
            \caption{Before intervention (Regular)}
        \end{subfigure}
        \hfill
        \begin{subfigure}[t]{0.3\textwidth}
            \centering
            \includegraphics[width=\textwidth]{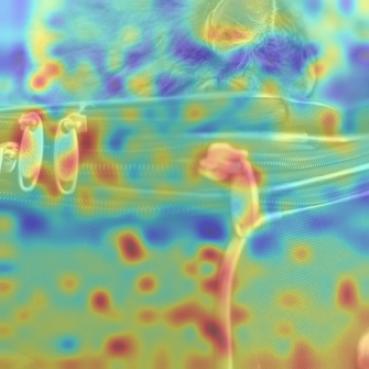}
            \caption{After intervention (CAI)}
        \end{subfigure}
    \end{tabular}
    } 
    \caption{\textbf{Case study} of visual input (a) with the query \textit{“Is there only one zipper in the picture?”}. 
    The intervention layers are shown in (b), and token-image similarity in (c) establishes grounding between the response token and the visual input. 
    The attention maps from the $0$-th head in layer $31$, before and after the similarity-guided intervention, are visualized in (d) and (e).}
    \label{fig:case}
\end{figure}

\noindent\textbf{Effect of the lowest-layer representation.}
As shown in Fig.~\ref{fig:app_sim}, the $0$-th layer hidden states exhibit a strong spatial alignment with the target object ("dog"), successfully preserving fine-grained, localized visual features; in contrast, deeper layers shift focus toward more global and abstract representations, reducing sensitivity to individual object features.
This observation aligns with recent interpretability studies on decoder-style LVLMs (e.g., LLaVA), which demonstrate that image-text communication~\cite{basu2024understanding} and object-localized information~\cite{neo2024towards} are strongest in early layers, while middle-to-deep layers increasingly aggregate visual evidence into lexical abstractions~\cite{basu2024understanding} and become more influenced by language priors~\cite{neo2024towards}.
This structural progression naturally motivates our design: use early layers to estimate where the token-relevant visual evidence is, and intervene only in deeper layers where grounding tends to degrade.


\noindent\textbf{Case study.}
Fig.~\ref{fig:case} presents a case study of MME with a visual input (a) and the query \textit{“Is there only one zipper in the picture?”}.
(b) depicts the entropy of the hallucinated response token (“Yes”) generated by LLaVA-1.5, contrasted with the non-hallucinated response token (“No”) produced by our \ours.
We set the start intervention layer $l_s=26$ and the entropy threshold $\gamma=0.2$, observing that the prediction entropy at layers $26, 27, 29, \textrm{and}\ 30$ exceeds $\gamma$, indicating a heightened risk of hallucination. Accordingly, intervention is applied at the subsequent layers, namely $27, 28, 30, \textrm{and}\ 31$.
During the intervention, token-image similarity (c) between the response token and input image (a) is computed at the lowest layer to establish a grounding baseline.
In the high-risk layers, the attention maps visualized in (d) often lack attention to the relevant regions.
By amplifying the attention weights of vision regions guided by (c), the attention map in (e) is reoriented toward token-relevant image regions and thereby mitigates hallucination.

%% file: sections/7_conclusion.tex
\section{Conclusion}
\label{sec:conclusion}

We presented Context-aware Attention Intervention (CAI), a training-free and plug-and-play approach for mitigating hallucinations in large vision-language models via a \emph{non-interference} principle: \emph{see only when needed}.
Rather than uniformly amplifying visual signals, CAI performs a \emph{minimal} and \emph{token-conditioned} attention intervention that is activated only for high-risk tokens and primarily in deeper layers where visual grounding tends to degrade, thereby strengthening grounding while preserving fluency.
We further provide a principled characterization of CAI as a KL-minimal attention reweighting, together with conditions under which small tilts improve likelihood for high-uncertainty tokens and guarantees of bounded interference when the gate is inactive.
Extensive results across multiple LVLM backbones and benchmarks demonstrate consistent hallucination reduction with lightweight inference-time overhead.

%% file: sections/8_appendix.tex
\clearpage
\appendix

\section*{Appendix}
\section{Proofs for Theoretical Results}
\label{app:theory-proofs}

\begin{proof}[Proof of Theorem~\ref{thm:kl-minimal}]
Consider 
\(
\max_{q\in\Delta^{N-1}}\ \mathbb{E}_q[s]\ \text{s.t.}\ D_{\mathrm{KL}}(q\Vert a)\le \varepsilon.
\)
The Lagrangian with multiplier $\eta\ge 0$ for the KL constraint and $\tau$ for normalization is
\[
\mathcal{L}(q,\eta,\tau)=\sum_i q_i s_i-\eta\!\left(\sum_i q_i\log\frac{q_i}{a_i}-\varepsilon\right)+\tau\!\left(\sum_i q_i-1\right).
\]
First-order optimality (KKT) gives, for each $i$,
\(
s_i-\eta\left(\log q_i-\log a_i+1\right)+\tau=0
\)
so that
\(
\log q_i=\log a_i+\tfrac{1}{\eta}s_i-\tfrac{\tau+ \eta}{\eta}
\),
hence
\(
q_i\propto a_i\,\exp(\tfrac{1}{\eta}s_i).
\)
Writing $\lambda=\tfrac{1}{\eta}\ge 0$ yields the exponential tilt
\(
q^\star(\lambda)\propto a\odot e^{\lambda s}.
\)
Since $D_{\mathrm{KL}}(\cdot\Vert a)$ is strictly convex in $q$ and the objective is linear, the solution is unique.
Moreover, with $A(\lambda)=\log\sum_i a_i e^{\lambda s_i}$, one has
\(
D_{\mathrm{KL}}(q^\star(\lambda)\Vert a)=\lambda A'(\lambda)-A(\lambda)
\)
and
\(
\frac{d}{d\lambda}D_{\mathrm{KL}}(q^\star(\lambda)\Vert a)=\lambda A''(\lambda)
=\lambda\,\mathrm{Var}_{q^\star(\lambda)}[s]\ge 0,
\)
so the KL budget $\varepsilon$ selects a unique $\lambda\ge 0$ by monotonicity. 
\end{proof}

\begin{proof}[Proof of Theorem~\ref{thm:entropy-gain}]
Let $x(a)=\sum_i a_i v_i$ and $z_y=u_y+w_y^\top x(a)$, $p=\mathrm{softmax}(z)$, and $\mathcal{L}(a)=-\log p_{y^\star}$.
The gradient of $\mathcal{L}$ w.r.t.\ $x$ is
\(
\nabla_x \mathcal{L}(a)=\sum_y p_y w_y - w_{y^\star} =: g(a).
\)
By the descent lemma for $L$-smooth functions (log-sum-exp is smooth), for $\Delta x=x(\tilde a)-x(a)$,
\[
\mathcal{L}(\tilde a)-\mathcal{L}(a)\ \le\ \langle g(a),\Delta x\rangle+\frac{L}{2}\|\Delta x\|^2,
\]
for some local $L>0$ depending on $\{w_y\}$ and $p$.
If the alignment condition is met (equivalently, $\langle -g(a),\Delta x\rangle>0$, i.e., $\langle g(a),\Delta x\rangle<0$), then the linear term is negative. 
Because the CAI tilt depends smoothly on $\lambda$ and $\Delta x=O(\lambda)$ for $\lambda\to 0$, there exists $\lambda_0>0$ such that the (negative) linear term dominates the quadratic remainder for all $0<\lambda\le \lambda_0$, implying
\(
\mathcal{L}(\tilde a)<\mathcal{L}(a).
\)
The entropy condition $H\ge H_0>0$ guarantees we are in the uncertain regime where $g(a)\neq 0$ (non-vanishing gradient), so the alignment condition is meaningful. 
\end{proof}

\begin{proof}[Proof of Theorem~\ref{thm:depth-decay}]
By assumption, the visual component evolves as 
\(
x^{(l)} = M^{(l)}x^{(l-1)}
\)
and there exists $\rho<1$ such that, for all $l\ge l_0+1$, the mixing is contractive.\footnote{If we further assume the spectral (operator) norm and that $M^{(l)}$ are normal, the bound $\|M^{(l)}\|\le \rho(M^{(l)})\le \rho$ follows from $\rho(M)\le\|M\|$. In general, the same conclusion holds under the standard bounded-mixing assumption $\|M^{(l)}\|\le \rho<1$.}
Applying sub-multiplicativity of operator norms,
\[
\begin{aligned}
\|x^{(l)}\|
&=\big\|M^{(l)}M^{(l-1)}\cdots M^{(l_0+1)}\,x^{(l_0)}\big\|\\
&\le\ \prod_{k=l_0+1}^{l}\|M^{(k)}\|\,\|x^{(l_0)}\|\\
&\le\ \rho^{\,l-l_0}\,\|x^{(l_0)}\|.
\end{aligned}
\]
Hence the visual signal decays geometrically with depth, making deeper layers yield larger marginal gains from reinforcement. 
\end{proof}

\begin{proof}[Proof of Theorem~\ref{thm:bounded-risk}]
For the tilted distribution $q^\star(\lambda)$ with log-normalizer \(A(\lambda)=\log\sum_i a_i e^{\lambda s_i}\),
\[
D_{\mathrm{KL}}(q^\star(\lambda)\Vert a)=\mathbb{E}_{q^\star(\lambda)}\!\left[\log\frac{q^\star(\lambda)}{a}\right]
=\lambda A'(\lambda)-A(\lambda).
\]
A Taylor expansion of \(A\) at \(\lambda=0\) gives
\(
A(\lambda)=\mu\lambda+\tfrac{1}{2}\sigma^2\lambda^2+O(\lambda^3)
\)
with \(\mu=\mathbb{E}_{a}[s]\) and \(\sigma^2=\mathrm{Var}_{a}[s]\).
Since \(A'(\lambda)=\mu+\sigma^2\lambda+O(\lambda^2)\),
\[
\begin{aligned}
D_{\mathrm{KL}}(q^\star(\lambda)\Vert a)
&=\lambda(\mu+\sigma^2\lambda+O(\lambda^2))-(\mu\lambda+\tfrac{1}{2}\sigma^2\lambda^2+O(\lambda^3))\\
&=\tfrac{1}{2}\sigma^2\lambda^2+O(\lambda^3).
\end{aligned}
\]
By Pinsker’s inequality,
\(
\|q^\star(\lambda)-a\|_{\mathrm{TV}}
\le \sqrt{\tfrac12 D_{\mathrm{KL}}(q^\star(\lambda)\Vert a)}
=O(\lambda),
\)
and any Lipschitz functional of \(a\) (e.g., \(w_y^\top x(a)\)) is perturbed only by \(O(\lambda)\).
Therefore, when the gate is off or \(s\) is weak (yielding small \(\lambda\)), the effect of CAI is negligible. 
\end{proof}

\section{Additional Experiments}
\subsection{Evaluation metrics}
\subsubsection{CHAIR} evaluates hallucination in image captioning by quantifying references to objects absent from the image. Using 500 randomly sampled MS-COCO images and the prompt \textit{“Please describe this image in detail”}, we compute two complementary metrics. Sentence-level CHAIR denotes the proportion of captions containing at least one hallucinated object:
\begin{equation}
    \textrm{CHAIR}_S=\frac{\left|\left\{\textrm{sentences with hallucinated object}\right\}\right|}{\left|\left\{\textrm{all sentences}\right\}\right|},
\end{equation}
and instance-level CHAIR is the proportion of hallucinated mentions among all object mentions:
\begin{equation}
    \textrm{CHAIR}_I=\frac{\left|\left\{\textrm{hallucinated objects}\right\}\right|}{\left|\left\{\textrm{all objects mentioned}\right\}\right|}.
\end{equation}

\subsubsection{POPE} serves as a binary classification task. For each image, the model is prompted with \textit{“Is [object] in this image?”} to determine object presence. Performance is measured using standard metrics: \textit{accuracy}, the proportion of correct predictions, and \textit{F1-score}, the harmonic mean of precision and recall:
\begin{equation}
    F1=\frac{2\times\textrm{precision}\times\textrm{recall}}{\textrm{precision}+\textrm{recall}}.
\end{equation}

\subsubsection{MME} quantifies fidelity via yes-or-no questions on existence, count, position, and color. For each object, the model’s responses are compared to ground truth, and the overall \textit{MME-score} aggregates correctness across all attributes, providing a fine-grained measure of object-level hallucination and semantic alignment.

\subsection{Additional Results}
\subsubsection{Comparison with more baselines.}
We extended our comparative analysis to include recent methods evaluated on LLaVA-1.5 using the POPE benchmark. As demonstrated in Table~\ref{tab:baseline}, CAI achieves the highest average accuracy and F1 scores. Importantly, this superior performance is coupled with substantial computational efficiency. The next best-performing baseline, LVLMs-Saliency, necessitates both saliency model inference and per-token attention reallocation. Consequently, it incurs a latency of 4294.36 ms/token, rendering it 33.7$\times$ slower than CAI (131.33 ms/token).
\begin{table}[h]
\centering
\caption{\textbf{Additional POPE evaluation results for LLaVA-1.5.}}
\label{tab:baseline}
\resizebox{0.95\textwidth}{!}{%
\begin{tabular}{lcccccc|ccr}
\toprule
\multirow{2}{*}{\textbf{Method}} & \multicolumn{2}{c}{Random} & \multicolumn{2}{c}{Popular} & \multicolumn{2}{c|}{Adversarial} & \multicolumn{2}{c}{\textit{Average}} & Latency $\uparrow$ \\
\cmidrule(lr){2-3}\cmidrule(lr){4-5}\cmidrule(lr){6-7}\cmidrule(lr){8-9}
 & \textit{Acc}$\uparrow$ & \textit{F1}$\uparrow$ & \textit{Acc}$\uparrow$ & \textit{F1}$\uparrow$ & \textit{Acc}$\uparrow$ & \textit{F1}$\uparrow$ & \textit{Acc}$\uparrow$ & \textit{F1}$\uparrow$ & (ms/token) \\
\midrule
\textit{LLaVA-1.5} & 85.46 & 86.07 & 81.20 & 82.22 & 75.87 & 78.27 & 80.84 & 82.19 & 89.62 \\
DoLa~\cite{chuang2024dola} & 81.77 & 82.09 & \underline{89.17} & \underline{88.54} & \textbf{86.37} & \textbf{85.98} & 85.77 & 85.54 & -\\
SID~\cite{huo2024self} & 80.33 & 81.38 & \textbf{89.47} & \textbf{89.10} & 85.87 & 85.88 & 85.22 & 85.45 & -\\
FarSight~\cite{tang2025seeing} & \underline{89.07} & \underline{89.01} & 85.63 & 86.04 & 79.30 & 81.05 & 84.67 & 85.37 & -\\
LVLMs-Saliency~\cite{zhang2026hallucination} & 89.10 & 88.20 & 87.07 & 86.01 & 83.33 & 83.12 & \underline{86.50} & \underline{85.77} & 4294.36 \\
\midrule
\cellcolor{lightgray!30}\textbf{\ours} & \cellcolor{lightgray!30}\textbf{89.24} & \cellcolor{lightgray!30}\textbf{89.14} & \cellcolor{lightgray!30}86.03 & \cellcolor{lightgray!30}85.53 & \cellcolor{lightgray!30}\underline{82.77} & \cellcolor{lightgray!30}\underline{82.73} & \cellcolor{lightgray!30}\textbf{86.68} & \cellcolor{lightgray!30}\textbf{85.80} & 131.33 \\
\bottomrule
\end{tabular}}
\end{table}

\begin{table}[!b]
\centering
\caption{\textbf{Additional CHAIR and MME evaluation results for Qwen-3.5.}}
\label{tab:qwen35}
\resizebox{0.8\textwidth}{!}{%
\begin{tabular}{lccccc}
\toprule
\multirow{2}{*}{\textbf{Method}} & \multicolumn{2}{c}{Max Token = $64$} & \multicolumn{2}{c}{Max Token = $128$} & \multirow{2}{*}{\textit{MME} $\uparrow$}\\
\cmidrule(lr){2-3} \cmidrule(lr){4-5}
 & \textit{CHAIR}$_S \downarrow$ & \textit{CHAIR}$_I \downarrow$ & \textit{CHAIR}$_S \downarrow$ & \textit{CHAIR}$_I \downarrow$\\ \midrule
\textit{Qwen-3.5} & 18.6 & 9.6 & 36.2 & 12.6 & 685.00 \\
PAI & 18.4 & 9.1 & 37.2 & 12.3 & 691.66 \\
\cellcolor{lightgray!30}\textbf{CAI}$_{\textrm{GLA-gated}}$ & \cellcolor{lightgray!30}17.2 & \cellcolor{lightgray!30}10.3 & \cellcolor{lightgray!30}35.0 & \cellcolor{lightgray!30}14.4 & \cellcolor{lightgray!30}700.00 \\
\cellcolor{lightgray!30}\textbf{\ours} & \cellcolor{lightgray!30}\textbf{16.6} & \cellcolor{lightgray!30}\textbf{8.5} & \cellcolor{lightgray!30}\textbf{29.8} & \cellcolor{lightgray!30}\textbf{11.4} & \cellcolor{lightgray!30}\textbf{720.00} \\ \bottomrule
\end{tabular}
}
\end{table}

\subsubsection{Evaluation on a stronger backbone.}
To validate generalization to newer architectures, we evaluate CAI on Qwen3.5-4B~\cite{qwen35blog}, which employs a hybrid design. Specifically, the model utilizes Softmax Attention at layers 3, 7, 11, 15, 19, 23, 27, and 31, while applying Gated Linear Attention (GLA) to all remaining layers.
Since GLA operates on recurrent states rather than explicit token-to-token attention maps, CAI's similarity-guided intervention has no direct attention map to modulate. We therefore intervene only on deep Softmax-Attention layers.
As reported in Table~\ref{tab:qwen35}, directly intervening on the GLA gates (denoted as $\text{CAI}_{\text{GLA-gated}}$) yields inconsistent gains. In contrast, applying CAI exclusively to the Softmax layers robustly improves performance on both the CHAIR and MME benchmarks, establishing a new baseline across all metrics. Notably, these evaluations utilize a single, fixed hyperparameter configuration across both datasets, without benchmark-specific re-tuning.

\subsection{Additional Discussions}
\subsubsection{Implementation details of contrastive decoding.}
Following PAI~\cite{liu2024paying}, $p(y_t|\mathbf{x})$ in Eq.~\ref{eq:decoding} is obtained by a \emph{text-only} forward pass (i.e., removing the image token; for LLaVA: text around ``\textit{$<$image$>$}'', for InstructBLIP/Qwen-VL: ``\textit{\{question\}\ Answer:}'').
This adds a small overhead of {0.04 s/token}.

\subsubsection{Efficiency of sparse token interventions.}
As shown in Fig.~\ref{fig:app_samples},  in LLaVA on the MME benchmark with $l_s=26$, \ours intervenes on only $6.67\%\sim39.38\%$ of tokens in layers 27 to 31, with interventions declining in deeper layers. This empirically validates the bounded-risk property (Theorem~\ref{thm:bounded-risk}), confirming that the method perturbs only a small subset of tokens, thereby reducing hallucinations while maintaining high inference efficiency.

\begin{figure}[h]
    \centering
    \includegraphics[width=0.65\linewidth]{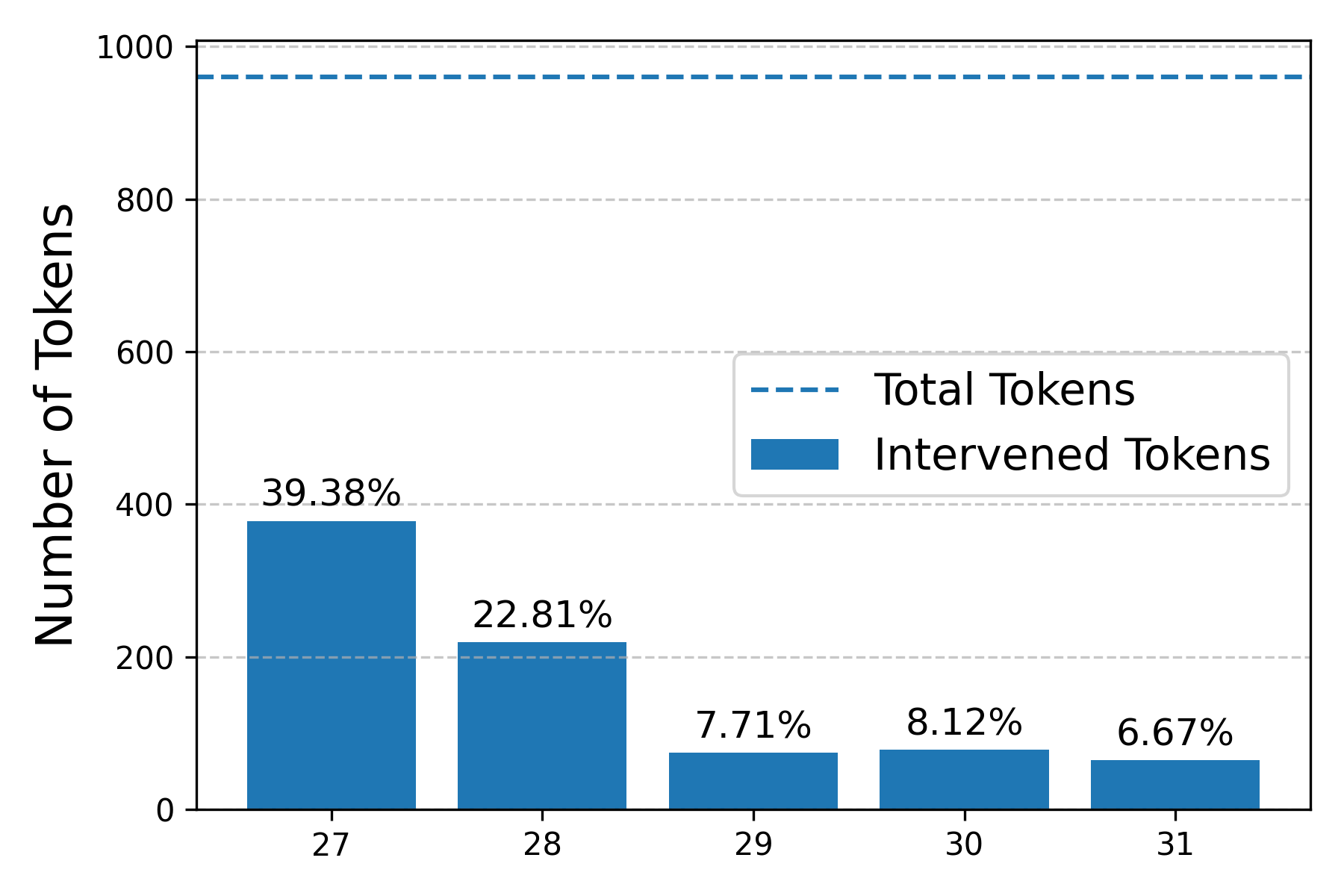}
    \caption{\textbf{Number of tokens requiring intervention at each layer}, along with their relative proportion to the total number of generated tokens.}
    \label{fig:app_samples}
\end{figure}

\subsubsection{Hyperparameter sensitivity and adaptive rules.}
CAI primarily relies on grid-search tuning for the start intervention layer $l_{s}$, the entropy threshold $\gamma$, and the contrastive decoding coefficient $\lambda$. However, fixed hyperparameters can directly transfer to newer architectures (Table~\ref{tab:qwen35}) without benchmark-specific re-tuning. To eliminate grid search entirely, we also add label-free adaptive defaults to reduce grid search: use a depth-proportional start layer
$
l_s=\lfloor 0.85L \rfloor
$
or, for hybrid architectures, the deepest Softmax-Attention layers; and use an adaptive entropy threshold
$
\gamma=\mu_H^{(t)}+0.5\sigma_H^{(t)},
$
where $\mu_H^{(t)}$ and $\sigma_H^{(t)}$ are running entropy statistics from previously decoded tokens in the same sample. This requires no validation labels. We will also clarify that contrastive decoding is optional: the attention-only CAI$^\dagger$ already improves over Regular and PAI$^\dagger$ in Table~\ref{tab:efficiency}, while $\lambda>1$ only adds complementary language-prior suppression.

\subsubsection{Additional case study.}
Fig.~\ref{fig:app_chair} illustrates three cases of long-text generation. The CHAIR metric is used to identify hallucinated and ground-truth tokens in “Regular” (LLaVA-1.5) responses. By applying \ours to reinforce visual grounding, the model amplifies the attention of response tokens toward relevant visual tokens, thereby mitigating hallucinations.
Note that, due to space constraints, only the attention map of a specific head in the intervention layer is visualized here.

\begin{figure}[t]
    \centering
    \includegraphics[width=\linewidth]{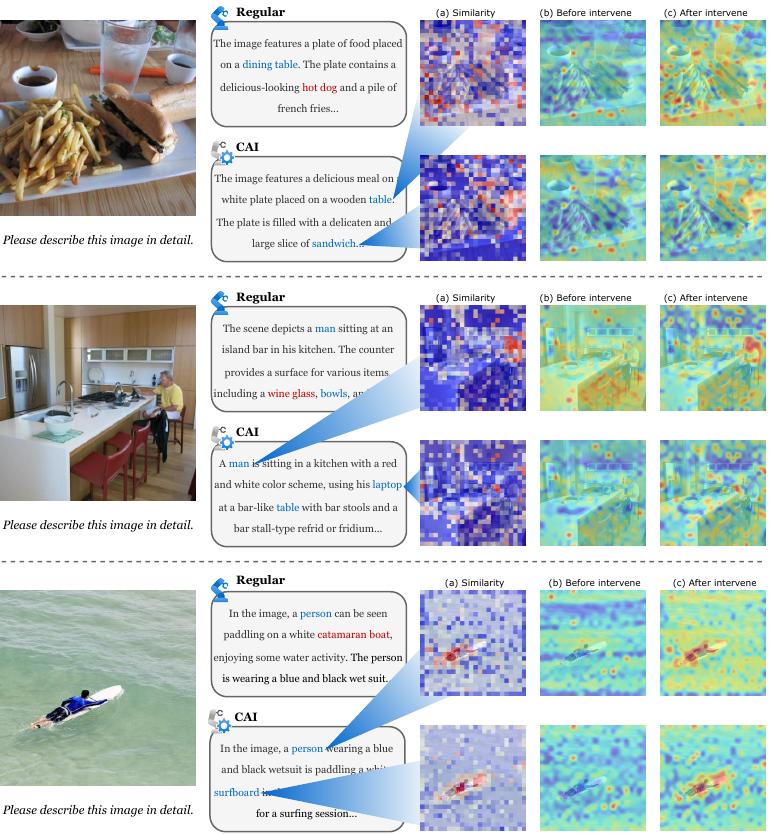}
    \caption{\textbf{Case study of CHAIR in long text generation.} Given the image and the prompt on the left, “Regular” response corresponds to LLaVA-1.5, with \colorbox{red!30}{red} denoting CHAIR hallucination words and \colorbox{blue!30}{blue} representing correctly recognized terms. Employing “CAI” effectively mitigates hallucinations, with the attention intervention process illustrated in (a), (b), and (c) on the right.}
    \label{fig:app_chair}
\end{figure}
